\newtheorem{theorem}{Theorem}
\newtheorem{lemma}{Lemma}
\newtheorem{remark}{Remark}
\newenvironment{restatedthm}[1]{%
\restatedthminner
}{\endrestatedthminner}
\newenvironment{restatedlem}[1]{%
\restatedleminner
}{\endrestatedleminner}
\title{Behaviour Policy Optimization: Provably Lower Variance \\ Return Estimates for Off-Policy Reinforcement Learning}
\author{
    Alexander W. Goodall,
    Edwin Hamel-De le Court,
    Francesco Belardinelli,
}
\title{Behaviour Policy Optimization: Provably Lower Variance \\ Return Estimates for Off-Policy Reinforcement Learning}
\author {
    % Authors
    Alexander W. Goodall\textsuperscript{\rm 1},
    Edwin Hamel-De le Court\textsuperscript{\rm 1},
    Francesco Belardinelli\textsuperscript{\rm 1}
}
\begin{document}

\frenchspacing

\maketitle

\begin{abstract}
Many reinforcement learning algorithms, particularly those that rely on return estimates for policy improvement, can suffer from poor sample efficiency and training instability due to high-variance return estimates. In this paper we leverage new results from off-policy evaluation; it has recently been shown that well-designed behaviour policies can be used to collect off-policy data for provably lower variance return estimates. This result is surprising as it means collecting data on-policy is not variance optimal. We extend this key insight to the online reinforcement learning setting, where both policy evaluation and improvement are interleaved to learn optimal policies. Off-policy RL has been well studied (e.g., IMPALA), with correct and truncated importance weighted samples for de-biasing and managing variance appropriately. Generally these approaches are concerned with reconciling data collected from multiple workers in parallel, while the policy is updated asynchronously, mismatch between the workers and policy is corrected in a mathematically sound way. Here we consider only one worker -- the behaviour policy, which is used to collect data for policy improvement, with provably lower variance return estimates. In our experiments we extend two policy-gradient methods with this regime, demonstrating better sample efficiency and performance over a diverse set of environments.
\end{abstract}

% Uncomment the following to link to your code, datasets, an extended version or similar.

% You must keep this block between (not within) the abstract and the main body of the paper.

\begin{links}    
    \link{Code}{https://github.com/sacktock/BPO}
    %\link{Datasets}{https://aaai.org/example/datasets}
    %\link{Extended version}{https://arxiv.org/abs/2511.10843}
\end{links}

\section{Introduction}

Reinforcement learning (RL) \cite{sutton2018reinforcement} has seen notable success in a diverse range of sequential decision making tasks, including advanced control systems \cite{schulman2017proximalpolicyoptimizationalgorithms}, high dimensional and complex games (e.g., Atari, StarCraft II, Go) \cite{mnih2015human, vinyals2019grandmaster, silver2017mastering}, and fine-tuning for Large Reasoning Models (LRM) \cite{deepseekai2025deepseekr1incentivizingreasoningcapability}. Learning optimal policies in such scenarios might require millions, 
even billions of agent-environment interactions \cite{mnih2015human, mnih2016asynchronous, jaderberg2016reinforcementlearningunsupervisedauxiliary}, which is often slow and expensive in many scenarios \cite{zhang2023new}. Off-policy RL \cite{wang2017sampleefficientactorcriticexperience, espeholt2018impala} aims to improve sample efficiency by reusing past experiences or data generated by a different behaviour policy, rather than following the current policy. However, off-policy methods may introduce new challenges -- the mismatch between the behaviour policy and the target policy can lead to high-variance return estimates and unstable learning \cite{agarwal2017effective}. Even na\"{i}ve off-policy correction with importance sampling (IS) can suffer from variance explosion when the behaviour policy and target policy differ \cite{precup2000eligibility}.

An alternative paradigm is offline RL \cite{ernst2005tree, fujimoto2019off, levine2020offline}, which offers a promising 
direction for learning optimal policies from limited data without having to interact online. However, many practitioners still deploy their policies online, to evaluate the final performance or tune hyperparameters \cite{fu2021d4rl,schrittwieser2021online,mathieu2023alphastar}, essentially violating the core assumption of this paradigm. Thus, off-policy evaluation is a crucial challenge in offline RL \cite{zhang2020gradientdice, liu2024efficient}, where the performance of a policy must be evaluated based on a fixed offline dataset. The Offline Data Informed (ODI) algorithm proposed by \citeauthor{liu2024efficient} (\citeyear{liu2024efficient}), offers a surprising new result -- well-designed behaviour policies can provide provably lower-variance return estimates for more (sample) efficient policy evaluation. 

\paragraph{Contributions.} We propose behaviour policy optimization (BPO), a new variance reduction technique for online RL, using off-policy data collected by a well-designed behaviour policy for provably lower-variance return estimates. 
For designing the behaviour policy we apply new results from off-policy evaluation in \cite{liu2024efficient} to the (discounted) online RL setting. We note that this extension is non-trivial and comes with additional technical challenges, e.g., non-stationarity of the target policy. Furthermore, the full undiscounted Monte Carlo return typically considered in off-policy evaluation is usually a bad choice for policy improvement in online RL \cite{williams1992simple, mnih2016asynchronous}, thus we introduce the truncated IS weighted TD($\lambda$) returns for estimating the target policy returns, we provide a proof of variance reduction and unbiasedness for this new estimator. 

Practically, we build upon and extend two established policy-gradient algorithms: REINFORCE \cite{williams1992simple} and Proximal Policy Optimization (PPO) \cite{schulman2017proximalpolicyoptimizationalgorithms}; both of which use return estimates in some way for policy improvement. Rather than collecting data on-policy, we concurrently optimize a behaviour policy used to collect off-policy data for training the target policy. The behaviour policy is optimized using a different loss function that includes Q-value estimates of the target policy, thus in addition to the behaviour policy, we train two additional Q-networks with Fitted Q-Evaluation (FQE) \cite{le2019batch}. Naturally, this increases the complexity of the full algorithm, and introduces additional challenges with hyperparameter tuning. However, for both REINFORCE and PPO we demonstrate no-worse and often better sample efficiency and final performance with respect to episode returns. Finally, we note that our methodology is not limited to discrete actions, we design an alternative loss function for optimizing the behaviour policy with continuous actions, which has a sound theoretical justification.

\section{Related Work}
\label{sec:relatedwork}

\iffalse
\paragraph{Variance Reduction in RL.}

Variance reduction techniques have long been studied in RL to improve learning efficiency and evaluation accuracy. In policy gradient methods, for example, using a baseline or advantage function to reduce the variance of gradient estimates is standard practice \cite{williams1992simple, sutton2018reinforcement}. Generalized Advantage Estimation \cite{schulman2018high} further trades off bias and variance by averaging multi-step returns, greatly stabilizing on-policy learning. \citeauthor{wang2017sampleefficientactorcriticexperience} (\citeyear{wang2017sampleefficientactorcriticexperience}) introduced an off-policy actor-critic with truncated importance weights (ACER), \citeauthor{espeholt2018impala} (\citeyear{espeholt2018impala}) showed that truncating importance weights for V-trace yields more stable learning in deep RL. These approaches generally acknowledge that some bias is acceptable if it dramatically lowers variance and prevents explosive updates. Our work, focused on an orthogonal direction: we actively optimize a behaviour policy that makes the importance weights naturally well-behaved and the data collected most useful without necessarily introducing bias. However, in practice some truncation (bias) is used, although it is typically less aggressive than in related works \cite{munos2016safe, espeholt2018impala}.
\fi

\paragraph{Variance Reduction and Off-policy RL.}

Variance reduction techniques have long been studied in RL to improve learning efficiency and evaluation accuracy. In policy gradient methods, for example, using a baseline or advantage function to reduce the variance of gradient estimates is standard practice \cite{williams1992simple, sutton2018reinforcement}. Generalized Advantage Estimation \cite{schulman2018high} further trades off bias and variance by averaging multi-step returns, greatly stabilizing on-policy learning. Off-policy RL underlines many significant breakthroughs in RL, e.g., training from experience replay \cite{mnih2015human}, scalable and parallelize training, e.g., IMPALA \cite{espeholt2018impala} and (offline) AlphaStar \cite{mathieu2023alphastar}. If the offline dataset has good coverage, optimal policies can be learnt without having to collect data online, which is often more expensive \cite{zhang2023new}. Importance sampling (IS) is the classical technique for correcting policy mismatch \cite{precup2000eligibility}, although unbounded importance weights can lead to high-variance estimates \cite{precup2000eligibility}. To combat this issue \citeauthor{wang2017sampleefficientactorcriticexperience} (\citeyear{wang2017sampleefficientactorcriticexperience}) introduced an off-policy actor-critic with truncated importance weights (ACER), \citeauthor{espeholt2018impala} (\citeyear{espeholt2018impala}) also showed that truncating importance weights for V-trace (IMPALA) reduced variance and yields more stable learning in deep RL, although IMPALA is orthogonal to our research here as it is used to correct for off-policy data collected by multiple workers in parallel with asynchronous policy updates. Furthermore, Retrace($\lambda$) \cite{munos2016safe} is a alternative returns estimator that truncates importance weights to 1, incorporating TD($\lambda$) for bias-variance trade-off, obtaining lower-variance updates (and unbiasedness under certain conditions). 
These approaches generally acknowledge that some bias is acceptable if it dramatically lowers variance and prevents explosive updates. Our work, focused on an orthogonal direction: we actively optimize a behaviour policy that makes the importance weights naturally well-behaved and the data collected most useful without necessarily introducing bias. However, in practice some truncation (bias) is used, although it is typically less aggressive than in related works \cite{munos2016safe, espeholt2018impala} 

\paragraph{Off-policy evaluation.} 
%Off-policy evaluation is an equally important problem, especially in the context of offline RL. 
In off-policy evaluation the per-decision IS (PDIS) \cite{precup2000eligibility} returns estimator, (corrected Monte Carlo returns), is commonly used as it is unbiased for the target policy. Unfortunately, the PDIS estimator suffers from high-variance, when the offline data and target policy are poorly aligned, motivating alternatives such as model-based evaluation \cite{mukherjee2022revar} and value-function approximation, e.g., via FQE \cite{le2019batch}. Recent advances in off-policy evaluation include the design of variance-reducing behaviour policies for collecting offline data for off-policy evaluation. Several prior works have explicitly tackled this problem, \citeauthor{hanna2017data} (\citeyear{hanna2017data}) formulated this as a gradient-based search for an optimal behaviour policy to minimize the variance of the IS estimator. They demonstrated improvements in data efficiency, although their approach was limited to (trajectory-level) IS, which is less general than the PDIS estimator and not well-suited to online RL. \citeauthor{mukherjee2022revar} (\citeyear{mukherjee2022revar}) derived a variance-optimal behaviour policy for PDIS, but only under the restrictive assumption of tree-structured MDPs and an accurate estimate the transition dynamics (model-based). Robust On-Policy Sampling (ROS) \cite{zhong2022robust} re-weights an existing behaviour policy toward under-represented states to reduce variance. However, ROS assumes the offline data consists of complete trajectories generated by known policies, and it forgoes IS corrections entirely. In contrast, we leverage ODI \cite{liu2024efficient}, which is a state-of-the-art method for behaviour policy design that operates in general MDPs and with no assumptions on underlying dynamics or data completeness. ODI can leverage (possibly incomplete) arbitrary offline data, and with correct IS weights preserves unbiasedness and comes with theoretical guarantees of variance reduction. This makes ODI much more practical for online RL where we don't necessarily want to wait for complete trajectories before updating our policy. However, all of these methods including ODI, are only concerned with estimating the full undiscounted Monte Carlo returns, which can exhibit high-variance and lead to unstable policy updates in online RL \cite{williams1992simple}, for a more practical implementation we consider an alternative estimator based on TD($\lambda$). Additional challenges arise with online RL, e.g., the underlying distribution generating the ``offline'' dataset is non-stationary, including the policy, which is constantly updated. Preventing over-fitting to past data and quick adaptation to policy updates is also crucial for optimizing the the behaviour policy and Q networks. 

\section{Background}
\label{sec:background}

In this section present the necessary background material, we start with the preliminaries on RL, followed by off-policy corrections with IS ratios. We then describe how importance sampling can be used for variance reduction, and we summarize how variance optimal sampling distributions or behaviour policies can be designed for off-policy evaluation.

\subsection{Preliminaries for RL}

We model the environment as a finite horizon discrete time Markov decision process (MDP) \cite{puterman2014markov}, defined as a tuple $\mathcal{M} := \langle \mathcal{S}, \mathcal{A}, r, p, p_0  \rangle$, where $\mathcal{S}$ is the set of states, $\mathcal{A}$ is the set of actions, $r: \mathcal{S} \times \mathcal{A} \to \mathbb{R}$ is the reward function, $p: \mathcal{S} \times \mathcal{A} \times \mathcal{S} \to [0, 1]$ is the probabilistic transition function, and $p_0:  \mathcal{S} \to [0,1]$ is the initial state distribution.

In this paper we consider with the infinite horizon discounted setting, the discount factor $\gamma \in [0, 1)$ trades-off the relative weighting of short- and long-term rewards.

At time step $t$, an action $A_t$ is sampled according to the agent's policy $\pi: \mathcal{S} \times \mathcal{A} \to [0, 1]$ which defines a probability distribution over actions from the current state $S_t$. The successor state $S_{t+1}$ is then sampled from $p(\cdot \vert S_t, A_t)$. Each episode generates a sequence of states, actions and rewards $S_0A_0R_1S_1A_1R_2S_2\ldots$. The discounted return, which defines the accumulated reward from timestep $t$ onwards, is given by $G_t := \sum^{\infty}_{k=t}\gamma^{k-t}R_{k+1}$. The state- and state-action value functions for the policy $\pi$ are defined as, $v_{\pi}(s) := \mathbb{E}_{\pi}[G_t \mid S_t =s]$ and $q_{\pi}(s, a) := \mathbb{E}_{\pi}[G_t\mid S_t=s, A_t=a]$ respectively. 

In this paper we consider the total discounted return $J(\pi) := \mathbb{E}_{S_0 \sim p_0}[v_{\pi}(S_0)]$ as the performance metric with which to evaluate and optimize the policy. The optimal policy is defined as $\pi^* := \arg\sup_{\pi} J(\pi)$. The most straightforward way to estimate $J(\pi)$ (policy evaluation) is to sample directly with the policy $\pi$ online in the environment. There are many sophisticated algorithms for updating the policy such that $J(\pi)$ is maximized (policy improvement), e.g., value-based \cite{mnih2015human}, policy-based \cite{haarnoja2018soft} and policy-gradient methods \cite{mnih2016asynchronous, schulman2017proximalpolicyoptimizationalgorithms}. In this paper, we are focused on improving the policy evaluation phase, whereby a well-designed behaviour policy $\mu$ is used to collect off-policy data for provably lower-variance return estimates. 

\subsection{Off-policy Corrections in RL}
\label{sec:offpolicycorrections}

The goal in off-policy RL is to use trajectories (sequences) generated by an arbitrary behaviour policy $\mu$, for (optionally) learning the value function $v_{\pi}$ of the target policy $\pi$ and optimizing the target policy $\pi$ to maximize the objective function $J(\pi)$. Since the behaviour policy $\mu$ and the target policy $\pi$ may differ the return estimates must be adjusted to account for the policy mismatch. 

\paragraph{Importance Sampling in RL.} %A straightforward way to adjust the return estimates and account for the mismatch between the behaviour policy $\mu$ and the target policy $\pi$, is to use IS ratios. 
Let $\rho_t := \frac{\pi(A_t \vert S_t)}{\mu(A_t \vert S_t)}$ and $\rho_{t: t'}:=\prod^{t'}_{i=t} \frac{\pi(A_i \vert S_i)}{\mu(A_i \vert S_i)}$ denote the per-decision step IS ratio (at time $t$) and the (product) sequence level ratios respectively. The PDIS Monte Carlo returns estimator at time step $t$, $G^{\text{PDIS}}_t$ \cite{precup2000eligibility}, is given by,
\begin{equation}
    G_t^{\text{PDIS}} := \sum^{\infty}_{k=t} \left( \prod^{k}_{i=t} \rho_i \right) \gamma^{k-t} R_{k+1} = \sum^{\infty}_{k=t} \rho_{t:k} \gamma^{k-t} R_{k+1} \label{eq:gpdis}
\end{equation}

For any policy $\mu$ that covers $\pi$ (i.e., $\forall s, a \,\mu(a \vert s) =0 \Rightarrow \pi(a \vert s) =0$), the $G^{\text{PDIS}}$ Monte Carlo returns estimator is unbiased \cite{precup2000eligibility}. Formally, for all $s \in \mathcal{S}$ we have, $\mathbb{E}_{\mu} [ G_t^{\text{PDIS}} \mid S_t = s] = v_{\pi}(s)$. To reduce the variance of (\ref{eq:gpdis}) it suffices to design a specific behaviour policy $\mu$ that produces lower-variance return estimates, while still covering $\pi$ \cite{liu2024efficient}.

\subsection{Variance reduction}
\label{sec:varianceoptimal}

Variance reduction is an important tool in statistics, sampling can either be costly or time consuming; if the variance of our estimator is lower then we require fewer samples to get an accurate estimate \cite{owen2013monte, rubinstein2016simulation}. In RL the problem is two-fold, not only is collecting samples both costly and time consuming \cite{espeholt2018impala, zhang2023new}, but higher-variance estimates lead to higher-variance gradients and more unstable learning \cite{williams1992simple, mnih2016asynchronous, schulman2018high}. First, we provide an intuitive example of how importance sampling can be used to reduce variance.

\paragraph{Example. } Consider the problem of estimating the probability that the random variable $X \sim p(\cdot)$ (with $p(\cdot) = \mathcal{N}(0, 1)$, i.e., standard normal) takes a value greater than $4$, this can be written as $\mathbb{E}_{X \sim \mathcal{N}(0, 1)}[\mathbf{1}(X > 4)]$, where $\mathbf{1}(\cdot)$ denotes the indicator function. After drawing $N$ samples $X_1, X_2, \ldots X_N$, according to $p(\cdot)$, the Monte Carlo estimate for this quantity is given by $I^{(N)} := \sum_{i=1}^N \mathbf{1}[X_i > 4]$. This estimate is unbiased (i.e., $\mathbb{E}[I] = \mathbb{E}_{X \sim \mathcal{N}(0, 1)}[\mathbf{1}[X > 4]]$), although it has high-variance; $X$ taking a value greater than $4$ is a low-probability event. Sampling with a longer tailed distribution, e.g., $q(\cdot) = \mathcal{N}(0, 4)$, is more efficient (lower-variance), correcting for the bias via IS weights $\rho(X) := p(X)/q(X)$, then $I_{IS}^{(N)} := \sum_{i=1}^N \rho(X_i)\mathbf{1}[X_i > 4]$ is the new estimate, for further intuition see Figure \ref{fig:samplingdenisty}.

\begin{figure}[t]
\begin{center}
\includegraphics[width=\columnwidth]{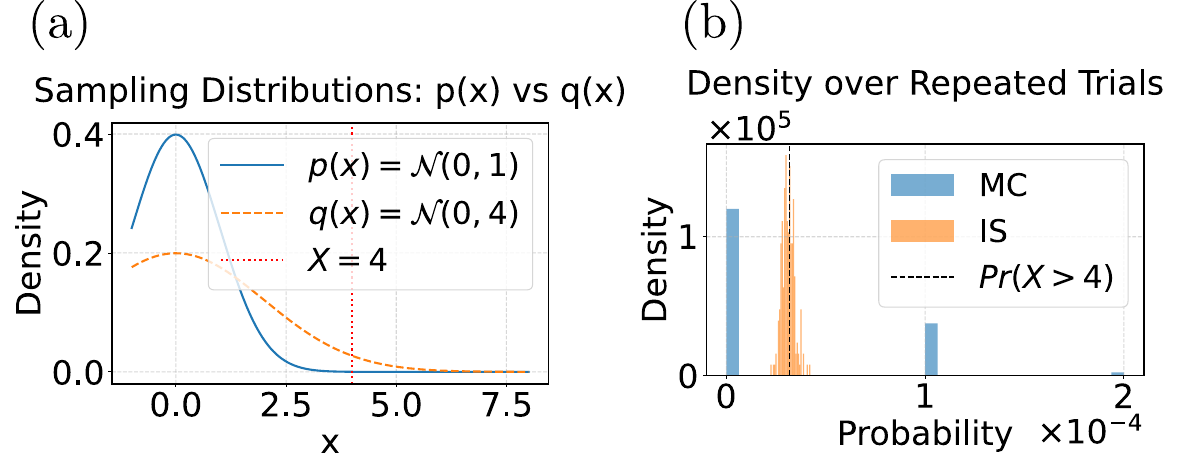}
\caption{(a) Sampling densities. (b) Distribution of estimates over 100,000 trials.}
\label{fig:samplingdenisty}
\end{center}
\end{figure}

\paragraph{Variance Optimal Sampling.} Optimal sampling distributions have been well-studied in statistics \cite{owen2013monte, rubinstein2016simulation}.  Consider an arbitrary distribution $p : \mathcal{X} \to [0, 1]$ over the set $\mathcal{X}$ and a function $f: \mathcal{X} \to \mathbb{R}$. We want to estimate $\mathbb{E}_{X \sim p}[f(X)]$, the goal is to find the optimal sampling distribution $q: \mathcal{X} \to [0, 1]$ such that the variance of the IS estimator, $I_{\text{IS}}=\rho(X)f(x)$ is as small as possible, where $\rho(X) := p(X)/q(X)$ denotes the correct IS weights. Consider the following set of distributions $\Lambda$, 
\begin{equation*}
    \Lambda := \{ q \in \Delta(\mathcal{X}) \mid \forall x \in \mathcal{X} \; q(x)=0 \Rightarrow p(x)f(x) =0  \} \label{eq:enlarged}
\end{equation*}
where $\Delta(\mathcal{X})$ denotes the set of all distributions over $\mathcal{X}$. We note, this is slightly weaker than requiring $q$ to cover $p$, although the estimator remains unbiased \cite{owen2013monte}. Let $\mathbb{V}$ denote the variance, the optimization problem then becomes,
\begin{equation}
    \min_{q \in \Lambda} \mathbb{V}_{X \sim q} [\rho(X)f(X)] \label{eq:problem}
\end{equation}
\citeauthor{owen2013monte} (\citeyear{owen2013monte}) give an optimal solution to (\ref{eq:problem}).
\begin{lemma}[\citeauthor{owen2013monte} \citeyear{owen2013monte}, Ch.~9.1]
\label{lem:optimal}
$\forall x \in \mathcal{X}$ let $q^*(x) \propto p(x)\vert f(x)\vert$. Then $q^*$ is an optimal solution to (\ref{eq:problem}). 
\end{lemma}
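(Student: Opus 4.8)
The plan is to reduce the variance-minimization problem (\ref{eq:problem}) to minimizing a \emph{second moment}, and then to obtain a matching lower bound via the Cauchy--Schwarz inequality whose equality case pins down $q^*$. First I would expand the variance as $\mathbb{V}_{X\sim q}[\rho(X)f(X)] = \mathbb{E}_{X\sim q}[\rho(X)^2 f(X)^2] - (\mathbb{E}_{X\sim q}[\rho(X)f(X)])^2$. The subtracted term equals $(\mathbb{E}_{X\sim p}[f(X)])^2$ by the unbiasedness noted after the definition of $\Lambda$, and crucially this is a constant that does not depend on the choice of $q \in \Lambda$. Hence (\ref{eq:problem}) is equivalent to minimizing $M(q) := \mathbb{E}_{X\sim q}[\rho(X)^2 f(X)^2]$. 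Writing $\rho = p/q$, I would simplify $M(q) = \sum_{x} q(x)\frac{p(x)^2 f(x)^2}{q(x)^2} = \sum_{x} \frac{p(x)^2 f(x)^2}{q(x)}$ (with the analogous integral form in the continuous case), where terms with $q(x)=0$ are admissible precisely because the constraint defining $\Lambda$ forces $p(x)f(x)=0$ there and they contribute nothing.

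Next I would derive a lower bound on $M(q)$ that is independent of $q$. Applying Cauchy--Schwarz to the factorization $p(x)|f(x)| = \frac{p(x)|f(x)|}{\sqrt{q(x)}}\cdot \sqrt{q(x)}$ gives $\left(\sum_x p(x)|f(x)|\right)^2 \le \left(\sum_x \frac{p(x)^2 f(x)^2}{q(x)}\right)\left(\sum_x q(x)\right) = M(q)$, using $\sum_x q(x)=1$. Thus $M(q) \ge (\mathbb{E}_{X\sim p}[|f(X)|])^2$ for every feasible $q$. The Cauchy--Schwarz equality condition is exactly proportionality of the two factors, $\frac{p(x)|f(x)|}{\sqrt{q(x)}} \propto \sqrt{q(x)}$, equivalently $q(x) \propto p(x)|f(x)|$. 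I would then verify that $q^*(x) \propto p(x)|f(x)|$ is feasible: it is nonnegative, its normalizing constant is $Z = \sum_x p(x)|f(x)| = \mathbb{E}_{X\sim p}[|f(X)|]$, and it lies in $\Lambda$ since $q^*(x)=0$ forces $p(x)|f(x)|=0$, hence $p(x)f(x)=0$. Substituting $q^*$ confirms it attains the bound, so it is a minimizer.

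The argument is essentially routine; the main point requiring care is the support/feasibility bookkeeping — ensuring the reduction to $M(q)$ is valid on the enlarged set $\Lambda$ (rather than the stricter condition that $q$ covers $p$), and confirming that $q^*$ itself satisfies the $\Lambda$ constraint even at points where $f$ vanishes. A secondary subtlety is non-uniqueness: the optimum is only determined up to the region where $p|f|=0$, since reallocating mass of $q$ among points with $p(x)f(x)=0$ changes neither feasibility nor $M(q)$; this is why $q^*$ should be described as \emph{an} optimal solution, matching the statement. As an alternative to Cauchy--Schwarz, I could instead form the Lagrangian $\sum_x \frac{p(x)^2 f(x)^2}{q(x)} + \lambda(\sum_x q(x)-1)$, whose stationarity condition $-p(x)^2 f(x)^2/q(x)^2 + \lambda = 0$ yields $q(x) \propto p(x)|f(x)|$, and convexity of $M(q)$ in $q$ on the positive orthant certifies this stationary point as the global minimizer.
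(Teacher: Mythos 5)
Your proposal is correct and takes essentially the same route as the paper's proof: both reduce the variance to the second moment $\mathbb{E}_{X\sim q}[\rho(X)^2f(X)^2]$ (the subtracted mean being constant on $\Lambda$ by the change of measure), then apply Cauchy--Schwarz and its equality condition to identify $q^*(x) \propto p(x)\vert f(x)\vert$ as the minimizer, with the same support bookkeeping for the enlarged set $\Lambda$. The only cosmetic differences are that the paper first argues any optimum must concentrate its mass on the support of $p\vert f\vert$ before invoking Cauchy--Schwarz, and it separately disposes of the degenerate case $p(x)f(x)\equiv 0$ (where the normalizer of $q^*$ vanishes and every feasible $q$ trivially gives zero variance), which your write-up implicitly assumes away when normalizing by $Z$.
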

\noindent We note here that $q^*(x) \propto p(x)\vert f(x)\vert$ is defined as,
\begin{equation*}
    q^*(x) = p(x)\vert f(x)\vert / \textstyle{\int_{x'\in \mathcal{X}} p(x')\vert f(x')\vert}
\end{equation*}
When $\mathcal{X}$ is discrete, this integral reduces to a sum and $q^*$ can be computed in closed form. However, we note, when $\mathcal{X}$ is continuous this integral may not have a closed form solution unless $f$ has a convenient expression. The following result gives some more intuition on the optimality of $q^*$.
\begin{lemma}[\citeauthor{owen2013monte} \citeyear{owen2013monte}, Ch.~9.1] \label{lem:zero}
If $\forall x \in \mathcal{X}, f(x) \geq 0$ or $\forall x \in \mathcal{X}, f(x) \leq 0$, then $\Lambda = \Lambda_{+}$ (where $\Lambda_{+}$ denotes the set of all distributions giving unbiased estimators), and $q^*$ (defined in Lemma \ref{lem:optimal}) gives zero variance, i.e., $\mathbb{V}_{q^*}(I_{\text{IS}}) =0$. 
\end{lemma}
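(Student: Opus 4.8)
The plan is to prove the two assertions of Lemma~\ref{lem:zero} separately, treating the set equality first and the zero-variance claim second. For $\Lambda = \Lambda_{+}$, I would note that the inclusion $\Lambda \subseteq \Lambda_{+}$ is already recorded above (every $q \in \Lambda$ yields an unbiased estimator), so only $\Lambda_{+} \subseteq \Lambda$ remains, and I would establish this by contraposition. Suppose $q \notin \Lambda$, i.e.\ there exists $x_0$ with $q(x_0) = 0$ but $p(x_0) f(x_0) \neq 0$. Evaluating the estimator's mean over the support of $q$ gives $\mathbb{E}_{X \sim q}[\rho(X) f(X)] = \sum_{x:\, q(x) > 0} p(x) f(x)$, so that the bias equals $\mathbb{E}_{X \sim q}[\rho(X) f(X)] - \mathbb{E}_{X \sim p}[f(X)] = -\sum_{x:\, q(x) = 0} p(x) f(x)$ (with the sum replaced by an integral in the continuous case).

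The crux is then the constant-sign hypothesis: if $f \geq 0$ (resp.\ $f \leq 0$) everywhere, every summand $p(x) f(x)$ in $\sum_{x:\, q(x)=0} p(x) f(x)$ has the same sign, so no cancellation can occur, and the presence of the strictly nonzero $x_0$-term forces the whole sum to be strictly nonzero. Hence $q$ is biased, i.e.\ $q \notin \Lambda_{+}$, proving $\Lambda_{+} \subseteq \Lambda$ and therefore equality. I would flag this as the main obstacle to articulate cleanly, since it is precisely here that the constant-sign assumption is indispensable: without it, positive and negative contributions from the region $\{q = 0\}$ could cancel, yielding unbiased distributions that lie outside $\Lambda$, which is exactly the ``slightly weaker than covering'' subtlety mentioned earlier in the excerpt.

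For the zero-variance claim, I would evaluate $I_{\text{IS}}$ pointwise on the support of $q^*$. Writing $Z := \int_{\mathcal{X}} p(x') \vert f(x') \vert \, dx' = \mathbb{E}_{X \sim p}[\vert f(X) \vert]$, Lemma~\ref{lem:optimal} gives $q^*(x) = p(x) \vert f(x) \vert / Z$. At any $x$ with $q^*(x) > 0$ we necessarily have $p(x) > 0$ and $f(x) \neq 0$, so $\rho(x) f(x) = \frac{p(x)}{q^*(x)} f(x) = Z \, \frac{f(x)}{\vert f(x) \vert} = Z \, \mathrm{sgn}(f(x))$, which is well-defined exactly on this support. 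Under the constant-sign hypothesis $\mathrm{sgn}(f(x))$ is the same value ($+1$ or $-1$) across the entire support of $q^*$, so $I_{\text{IS}}$ equals a single constant $c = \pm Z$ almost surely; since $Z = \vert \mathbb{E}_{X\sim p}[f(X)] \vert$ in the constant-sign case, this constant is exactly the target $\mathbb{E}_{X \sim p}[f(X)]$. Because a random variable that is almost surely constant has variance zero, we conclude $\mathbb{V}_{q^*}(I_{\text{IS}}) = 0$, which simultaneously reconfirms unbiasedness and completes the proof. The remaining work is routine bookkeeping of the support (handling measure-zero sets in the continuous case and the two sign branches), with no further conceptual difficulty.
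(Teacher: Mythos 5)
Your proof is correct and follows essentially the same route as the paper's: $\Lambda \subseteq \Lambda_{+}$ by the change-of-measure computation, $\Lambda_{+} \subseteq \Lambda$ via the constant-sign/no-cancellation argument (yours phrased contrapositively, the paper's directly), and zero variance by showing $I_{\text{IS}}$ equals the constant $Z = \int_{x\in\mathcal{X}} p(x)\vert f(x)\vert$ on the support of $q^*$, with your $\mathrm{sgn}$ bookkeeping merely unifying the two sign branches the paper treats by symmetry. The only substantive difference is that the paper also disposes of the degenerate case $p(x)f(x) \equiv 0$ (where the normalizer vanishes and any $q$ trivially gives zero variance), which your write-up implicitly assumes away by dividing by $Z$.
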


\paragraph{Variance Optimal Sampling for off-policy evaluation.} We now summarise \citeauthor{liu2024efficient}'s results. \citeauthor{liu2024efficient} (\citeyear{liu2024efficient}) only consider undiscounted finite horizon policy evaluation, thus the notation here is slightly adapted so that it fits into the discounted setting. The goal is to minimize $\mathbb{V}_{\mu}(G^{\text{PDIS}}_0)$ by designing a behaviour policy $\mu$ tailored to a specific target policy $\pi$, such that $\mathbb{E}_{\mu}[G^{\text{PDIS}}_0]$ is still unbiased. \citeauthor{liu2024efficient} (\citeyear{liu2024efficient}) restrict themselves to searching for the \emph{one-step optimal behaviour policy} rather than seeking \emph{global optimality}, which is requires an estimate of the transition probabilities \cite{liu2024efficient} (model-based). The search space of policies is denoted $\Lambda$ (similar to before), which is contained in the space of all unbiased policies, denoted $\Lambda_{+}$, and contains the space of all policies covering $\pi$, denoted $\Lambda_{-}$. The one-step optimal policy, $\hat \mu$, is constructed by assuming we sample with $\hat\mu$ from the current step and $\pi$ thereafter, rather than behaving optimally with $\mu^*$ thereafter, where $\mu^*$ denotes the globally variance optimal policy in $\Lambda$. The optimization problem can be written as,
\begin{equation}
    \min_{\mu \in \Lambda} \mathbb{V}_{A_t \sim \mu, A_{t+1} \sim \pi,\ldots}\big( G^{\text{PDIS}}_t \mid S_t = s \big) \label{eq:onestepoptimal}
\end{equation}
First we define,
\begin{multline}
    \hat q_{\pi}(s, a) = q^2_{\pi}(s, a) + \nu_{\pi}(s, a) + \\\gamma^2 \mathbb{E}_{s' \sim p}\Big[ \mathbb{V}_{\pi}\big(G_{t+1}^{\text{PDIS}} \mid S_{t+1} = s'\big) \Big]
\end{multline}
where $\nu(s, a) = \mathbb{V}_{S_{t+1} \sim p}(\gamma v_{\pi}(S_{t+1}) \mid S_t = s, A_t = a)$, is the next step variance of the value function $v_{\pi}$. Using Lemma \ref{lem:optimal}, \citeauthor{liu2024efficient} (\citeyear{liu2024efficient}) show that the one-step optimal policy $\hat \mu$ satisfying (\ref{eq:onestepoptimal}) is given by,
\begin{equation}
    \hat\mu(a \vert s) \propto \pi(a \vert s) \textstyle{\sqrt{\hat q_{\pi}(s, a)}} \label{eq:propto}
\end{equation}
While $\hat q_{\pi}$ might have a complicated definition, \citeauthor{liu2024efficient} provide a convenient form which we can be learnt. 

\section{Off-policy TD($\lambda$)}
\label{sec:eligibility}

In this section we introduce a new new off-policy returns estimator (based on TD($\lambda$)) which gives us better control of the bias-variance trade-off, and immediately fits into algorithms like PPO \cite{schulman2017proximalpolicyoptimizationalgorithms}. For this estimator we provide a proof of unbiasedness and variance reduction under behaviour policy $\hat \mu$ designed in (\ref{eq:propto}), we also provide a practical way of estimating $\hat q_{\pi}$ in the discounted setting. 

\paragraph{Truncated IS-TD($\lambda$) returns.} We introduce the truncated IS weighted TD($\lambda$) returns, $G^{\text{TIS}, \lambda}$, formally,
\begin{equation}
    G^{\text{TIS}, \lambda}_t = v_{\pi}(S_t) + \sum^{\infty}_{k=t} (\gamma\lambda)^{k-t} \left(\prod_{i=t}^{k-1} c_i \right) \delta_k \label{eq:tislamda}
\end{equation}
where $c_t := \min(\bar c, \frac{\pi(A_t \vert S_t)}{\mu(A_t \vert S_t)})$ and $\rho_t := \min(\bar \rho, \frac{\pi(A_t \vert S_t)}{\mu(A_t \vert S_t)})$ are the truncated IS ratios, and $\delta_k := \rho_k (R_{k+1} + \gamma v_{\pi}(S_{k+1})- v_{\pi}(S_k))$ is the temporal difference (TD) error. 
\begin{remark} \label{rem:recursion}
Similar to eligibility traces \cite{sutton2018reinforcement}, the truncated IS weighted TD($\lambda$) returns can be computed efficiently, by stepping backwards through the following recursive definition, 
\begin{equation*}
    G_t^{\text{TIS}, \lambda} = v_{\pi}(S_t)  + \delta_t + \gamma\lambda c_t\big( G^{\text{TIS, $\lambda$}}_{t+1} - v_{\pi}(S_{t+1})\big) 
\end{equation*} 
\end{remark}
This estimator is reminiscent of V-trace \cite{espeholt2018impala} which also introduces truncated IS ratios, used to control the variance of the estimator. The truncation parameters $\bar c$ and $\bar \rho$ play different roles: since $\rho_t$ appears in the definitions of the TD error, $\bar\rho$ affects the fixed point of convergence of the value function trained with targets $G^{\text{TIS}, \lambda}$; for $\bar \rho$ this fixed point is exactly $v_{\pi}$, for $\bar \rho < \infty$ this fixed point is somewhere between $v_{\pi}$ and $v_{\mu}$. Here $\bar c$ does not affect the fixed point of convergence, the $c_t$s are ``trace cutting'' coefficients used as a variance reduction technique, so that TD errors further in the sequence may play less of a role in the update of the value function, reducing the chance of possible variance explosions \cite{espeholt2018impala}. Furthermore, the parameter $\lambda$ also acts as a variance reduction tool directly trading off bias and variance (with $\lambda=1 \Rightarrow$ low-bias, $\lambda=0 \Rightarrow$ low-variance). We summarize an important property of $G^{\text{TIS}, \lambda}$.
\begin{lemma} \label{lem:vtrace} Let $\bar c, \bar \rho \geq 1$, $\mu = \pi$, then, $\mathbb{E}_{\mu}[G_t^{\text{TIS}, \lambda}] = \mathbb{E}_{\pi}[G_t]$.
\end{lemma}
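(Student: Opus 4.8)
The plan is to exploit the fact that the hypotheses $\mu = \pi$ and $\bar c, \bar \rho \geq 1$ collapse the truncated estimator to the ordinary on-policy $\lambda$-return, and then to invoke the Bellman (martingale-difference) property of the TD errors. First I would observe that when $\mu = \pi$ the trajectory distributions coincide, so $\mathbb{E}_{\mu} = \mathbb{E}_{\pi}$, and every ratio $\frac{\pi(A_i \mid S_i)}{\mu(A_i \mid S_i)}$ equals $1$. Hence $c_i = \min(\bar c, 1) = 1$ and $\rho_i = \min(\bar \rho, 1) = 1$ almost surely, using $\bar c, \bar \rho \geq 1$. Consequently the trace products $\prod_{i=t}^{k-1} c_i$ are all $1$ and the TD errors reduce to the on-policy form $\delta_k = R_{k+1} + \gamma v_{\pi}(S_{k+1}) - v_{\pi}(S_k)$, so that (\ref{eq:tislamda}) becomes the familiar $\lambda$-return $G_t^{\text{TIS}, \lambda} = v_{\pi}(S_t) + \sum_{k=t}^{\infty} (\gamma\lambda)^{k-t} \delta_k$.

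The heart of the argument is then to take the expectation under $\pi$ and show every correction term vanishes. I would interchange the expectation with the infinite sum and obtain $\mathbb{E}_{\pi}[G_t^{\text{TIS}, \lambda} \mid S_t = s] = v_{\pi}(s) + \sum_{k=t}^{\infty} (\gamma\lambda)^{k-t}\, \mathbb{E}_{\pi}[\delta_k \mid S_t = s]$. The key step is that each TD error is a martingale difference: conditioning on $S_k$ and using the Bellman equation $\mathbb{E}_{\pi}[R_{k+1} + \gamma v_{\pi}(S_{k+1}) \mid S_k] = v_{\pi}(S_k)$ gives $\mathbb{E}_{\pi}[\delta_k \mid S_k] = 0$. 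Applying the tower property, $\mathbb{E}_{\pi}[\delta_k \mid S_t = s] = \mathbb{E}_{\pi}\big[\mathbb{E}_{\pi}[\delta_k \mid S_k] \mid S_t = s\big] = 0$ for every $k \geq t$.

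With all correction terms equal to zero, the conditional expectation reduces to $\mathbb{E}_{\pi}[G_t^{\text{TIS}, \lambda} \mid S_t = s] = v_{\pi}(s)$, which is exactly $\mathbb{E}_{\pi}[G_t \mid S_t = s] = v_{\pi}(s)$ by the definition of $v_{\pi}$. Taking a further expectation over the distribution of $S_t$ then yields the unconditional statement $\mathbb{E}_{\mu}[G_t^{\text{TIS}, \lambda}] = \mathbb{E}_{\pi}[G_t]$ claimed in the lemma.

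I expect the only genuine obstacle to be the interchange of expectation and the infinite summation. One must confirm that the series is absolutely summable in expectation: with bounded rewards (hence bounded $v_{\pi}$) one has $\lvert \delta_k \rvert \leq C$ for some constant $C$, and since $\gamma\lambda < 1$ the bound $\sum_{k=t}^{\infty} (\gamma\lambda)^{k-t} C < \infty$ justifies dominated convergence / Fubini, so the term-by-term vanishing is legitimate. Everything else is routine once the truncated ratios have been shown to equal $1$.
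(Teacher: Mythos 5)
Your proof is correct and takes essentially the same route as the paper's own analysis: both reduce the truncated ratios $c_i,\rho_i$ to $1$ under $\mu=\pi$ with $\bar c,\bar\rho\geq 1$, annihilate each TD error in expectation via the Bellman equation for $v_{\pi}$, and conclude with the tower property $\mathbb{E}_{\pi}[\mathbb{E}_{\pi}[G_t\mid S_t]]=\mathbb{E}_{\pi}[G_t]$. Your explicit dominated-convergence justification for interchanging the expectation with the infinite sum (using bounded rewards and $\gamma\lambda<1$) is a small welcome addition that the paper leaves implicit.
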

The analysis can be found in Appendix \ref{sec:lemvtrace}. This property is important, as with appropriate hyperparameter settings we can achieve unbiasedness. We treat $\bar c$ and $\bar \rho$ as hyperparameters; generally this result indicates that a good approach is to set $\bar \rho \geq \bar c \geq 1$. Additionally, $\lambda$ should not stray too far from 1 as it biases our estimate when we rely on value estimates. We continue with the the following result.
\begin{theorem}[Unbiasedness] \label{thm:unbiasedness}
Let $\bar c, \bar \rho = \infty$; then for all $\mu \in \Lambda$, $s \in \mathcal{S}$, $\lambda \in [0,1]$, we have,
    $\mathbb{E}_{\mu}[ G_t^{\text{TIS}, \lambda} \mid S_t = s ] = v_{\pi}(s)$
\end{theorem}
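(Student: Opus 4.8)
The plan is to show that, once $\bar c=\bar\rho=\infty$, the estimator collapses to a per-decision importance-weighted sum of temporal-difference errors whose conditional mean vanishes under $\pi$, so that only the leading $v_\pi(S_t)$ term survives. First I would substitute $\bar c=\bar\rho=\infty$, so that every truncated coefficient $c_i$ and every $\rho_k$ reduces to the exact ratio $w_i:=\pi(A_i\vert S_i)/\mu(A_i\vert S_i)$. Absorbing the factor $\rho_k=w_k$ that sits inside $\delta_k$ into the trace product $\prod_{i=t}^{k-1}c_i=\prod_{i=t}^{k-1}w_i$ turns the coefficient of the \emph{plain} TD error $\tilde\delta_k:=R_{k+1}+\gamma v_\pi(S_{k+1})-v_\pi(S_k)$ into the full per-decision weight $\rho_{t:k}=\prod_{i=t}^{k}w_i$. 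Hence (\ref{eq:tislamda}) rewrites as $G^{\text{TIS},\lambda}_t=v_\pi(S_t)+\sum_{k=t}^{\infty}(\gamma\lambda)^{k-t}\rho_{t:k}\tilde\delta_k$, which is precisely the form to which the PDIS change-of-measure argument used for (\ref{eq:gpdis}) applies.

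Next I would establish the key change-of-measure identity: for any integrable function $f$ of the trajectory through time $k+1$, $\mathbb{E}_\mu[\rho_{t:k}f\mid S_t=s]=\mathbb{E}_\pi[f\mid S_t=s]$. This follows because, conditioned on $S_t=s$, the $\mu$- and $\pi$-laws of $(A_t,S_{t+1},\dots,A_k,S_{k+1})$ share the same transition kernels $p$ and differ only through the action probabilities, whose Radon--Nikodym ratio is exactly $\rho_{t:k}$. Applying it with $f=\tilde\delta_k$ gives $\mathbb{E}_\mu[\rho_{t:k}\tilde\delta_k\mid S_t=s]=\mathbb{E}_\pi[\tilde\delta_k\mid S_t=s]$, and by the Bellman consistency $v_\pi(S_k)=\mathbb{E}_\pi[R_{k+1}+\gamma v_\pi(S_{k+1})\mid S_k]$ together with the Markov property (tower rule, conditioning first on $S_k$), each summand satisfies $\mathbb{E}_\pi[\tilde\delta_k\mid S_t=s]=0$.

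Since $v_\pi(S_t)=v_\pi(s)$ is constant given $S_t=s$, the claim then follows as soon as I justify interchanging the expectation with the infinite sum; this is the one genuinely technical step and the point I expect to be the main obstacle, because with $\bar\rho=\infty$ the weights $\rho_{t:k}$ are \emph{unbounded}. I would resolve it by applying the same change of measure to absolute values: $\mathbb{E}_\mu[\rho_{t:k}|\tilde\delta_k|\mid S_t=s]=\mathbb{E}_\pi[|\tilde\delta_k|\mid S_t=s]\le C$ uniformly in $k$ whenever rewards and $v_\pi$ are bounded, so that $\sum_{k=t}^{\infty}(\gamma\lambda)^{k-t}\,\mathbb{E}_\mu[\rho_{t:k}|\tilde\delta_k|\mid S_t=s]\le C\sum_{k=t}^{\infty}(\gamma\lambda)^{k-t}<\infty$ because $\gamma\lambda\le\gamma<1$. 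The crucial observation is that the change of measure trades the unbounded $\mu$-weighted moment for a bounded $\pi$-moment, which the geometric factor then renders summable; absolute convergence licenses Fubini, and summing the zero-mean terms yields $\mathbb{E}_\mu[G^{\text{TIS},\lambda}_t\mid S_t=s]=v_\pi(s)$.

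One point to treat carefully is that the statement is for $\mu\in\Lambda$ rather than for $\mu$ strictly covering $\pi$: where $\mu(a\vert s)=0$ while $\pi(a\vert s)>0$ the ratio is ill-defined, but by the defining property of $\Lambda$ such actions contribute nothing to the relevant integrand, so the identity still holds on the support of $\mu$, exactly as in Owen's weak-coverage condition (Lemma~\ref{lem:zero}). This generalizes Lemma~\ref{lem:vtrace}, which handled only $\mu=\pi$; the recursion of Remark~\ref{rem:recursion} offers an alternative inductive route, but it would require a separate fixed-point argument over the infinite horizon, so I would prefer the direct summation above.
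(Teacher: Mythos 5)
Your proposal follows essentially the same route as the paper's own proof: with $\bar c=\bar\rho=\infty$ the trace product absorbs the $\rho_k$ sitting inside $\delta_k$, turning (\ref{eq:tislamda}) into $v_\pi(S_t)$ plus per-decision weights $\rho_{t:k}$ on plain TD errors, after which a change of measure converts each $\mu$-expectation into a $\pi$-expectation and the Bellman equation annihilates every summand. The only structural difference is that the paper peels off the $k=t$ term and routes it through $\mathbb{E}_\mu[\rho_t q_\pi(S_t,A_t)\mid S_t=s]=v_\pi(s)$ while you kill all TD terms uniformly; these are the same computation. Your explicit interchange step is a genuine improvement over the paper, which swaps the infinite sum and the expectation silently: trading the unbounded $\mu$-weighted moment for a bounded $\pi$-moment and summing against $(\gamma\lambda)^{k-t}$ is exactly the right way to license Fubini, and it correctly identifies where the untruncated weights could cause trouble.

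There is, however, one genuine gap, and it sits precisely at the point you flagged as delicate: the weak-coverage remark at the end is wrong as stated (the paper's proof makes the identical leap at its step ``since $\mathbb{E}_\mu[\rho_t]=1$,'' so you match the paper in rigor, but the claim does not actually hold for all $\mu\in\Lambda$). Membership in $\Lambda$ only guarantees $\pi(a\mid s)\,q_\pi(s,a)=0$ on actions excluded by $\mu$; it does \emph{not} make the baseline terms in $\tilde\delta_k$ vanish there, since those involve $v_\pi$, not $q_\pi$. Your change-of-measure identity $\mathbb{E}_\mu[\rho_{t:k}\tilde\delta_k\mid S_t=s]=\mathbb{E}_\pi[\tilde\delta_k\mid S_t=s]$ requires $\mathbb{E}_\mu[\rho_k\mid S_k]=1$, i.e.\ $\sum_{a:\,\mu(a\mid S_k)>0}\pi(a\mid S_k)=1$, which is full coverage ($\mu\in\Lambda_-$), not the weaker condition defining $\Lambda$. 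A one-step counterexample: a single state with two actions, $\pi$ uniform, $r(a_1)=1$, $r(a_2)=0$, episode terminating immediately, so $v_\pi=\tfrac12$ and $q_\pi(\cdot,a_2)=0$; then $\mu(a_1)=1$ lies in $\Lambda$, yet $G^{\text{TIS},\lambda}_0=v_\pi+\rho_0(R_1-v_\pi)=\tfrac12+\tfrac12\cdot\tfrac12=\tfrac34$ deterministically, a biased estimate, even though the baseline-free PDIS form (\ref{eq:gpdis}) gives $\rho_0R_1=\tfrac12=v_\pi$ and remains unbiased. The moral is that the $v_\pi$-baseline terms, which are harmless under full coverage because $\mathbb{E}_\mu[\rho_k\mid S_k]=1$ makes them mean-zero, pick up the missing $\pi$-mass under strictly weak coverage; your proof (and the theorem's ``for all $\mu\in\Lambda$'') is clean only for covering policies.
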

Theorem \ref{thm:unbiasedness} ensures that while we search over the space of policies $\Lambda$ our estimator remains unbiased (under certain conditions). The proof can be found in Appendix \ref{sec:thmunbiasedness}. Now we provide provable variance reduction for $G_t^{\text{TIS}}$ under the behaviour policy $\hat \mu$ designed in (\ref{eq:propto}).

\begin{theorem}[Variance Reduction] \label{thm:variancereduction} 
For any $s \in \mathcal{S}$, $\gamma \in [0,1)$, and $\lambda = 1$, $\bar c, \bar \rho = \infty$,
    \begin{equation*}
        \mathbb{V}_{\hat \mu}\big(G_t^{\text{TIS}, \lambda} \mid S_t =s \big) \leq  \mathbb{V}_{\pi}\big(G_t^{\text{TIS}, \lambda} \mid S_t = s\big) - \epsilon(s)
    \end{equation*}
Where,
\begin{equation*}
    c(s) := \textstyle{\mathbb{E}_{a \sim \pi}[\hat q_{\pi}(s, a)] - \Big(\mathbb{E}_{a\sim \pi}\big[\sqrt{\hat q_{\pi}(s, a)}\big]\Big)^2}
\end{equation*}
And, $\epsilon(s) := c(s) + \gamma^2\mathbb{E}_{A_t \sim \hat \mu_t} \Big[ \rho_t^2 \mathbb{E}_{S_{t+1} \sim p}\big[\epsilon(S_{t+1})\big] \Big]$ 
\end{theorem}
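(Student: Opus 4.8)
The plan is to reduce the statement to a recursive variance identity and then extract a per-step reduction from the optimality of $\hat\mu$. First I would specialise the recursion of Remark~\ref{rem:recursion} to $\lambda=1$ and $\bar c=\bar\rho=\infty$, so that $c_t=\rho_t$ is the untruncated ratio and
\[
G_t^{\text{TIS},1}=v_\pi(S_t)+\rho_t\bigl(R_{t+1}-v_\pi(S_t)+\gamma G_{t+1}^{\text{TIS},1}\bigr).
\]
Writing $Y_t:=G_t^{\text{TIS},1}-v_\pi(S_t)$, observe that $\mathbb{V}_\mu(G_t^{\text{TIS},1}\mid S_t=s)=\mathbb{V}_\mu(Y_t\mid S_t=s)$ because $v_\pi(s)$ is constant given $S_t=s$; this isolates the stochastic part of the estimator and is what makes the recursion tractable.

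Next I would apply the law of total variance twice, first conditioning on $A_t$ and then on $S_{t+1}$, using Theorem~\ref{thm:unbiasedness} to substitute $\mathbb{E}_\mu[G_{t+1}^{\text{TIS},1}\mid S_{t+1}=s']=v_\pi(s')$ and the identity $\mathbb{E}_\mu[Y_t\mid S_t=s,A_t=a]=\rho_t\bigl(q_\pi(s,a)-v_\pi(s)\bigr)$. Since this conditional mean has zero expectation under $\mu$, I obtain, for any $\mu$ covering $\pi$,
\[
\mathbb{V}_\mu(G_t^{\text{TIS},1}\mid S_t=s)=\mathbb{E}_{a\sim\mu}\Bigl[\rho_t^2\bigl((q_\pi(s,a)-v_\pi(s))^2+\nu_\pi(s,a)+\gamma^2\mathbb{E}_{s'\sim p}[\mathbb{V}_\mu(G_{t+1}^{\text{TIS},1}\mid S_{t+1}=s')]\bigr)\Bigr].
\]
This is the discounted analogue of the decomposition used to define $\hat q_\pi$; the goal of this step is to match the bracketed per-$(s,a)$ term to $\hat q_\pi$ (folding in the on-policy future variance) so that the $\mu$-dependence collapses to the single factor $\mathbb{E}_{a\sim\mu}[\rho_t^2\,\hat q_\pi(s,a)]$.

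I would then instantiate the recursion at $\mu=\pi$ (where $\rho_t\equiv1$) and at $\mu=\hat\mu$ and subtract, defining $\epsilon(s):=\mathbb{V}_\pi(G_t^{\text{TIS},1}\mid S_t=s)-\mathbb{V}_{\hat\mu}(G_t^{\text{TIS},1}\mid S_t=s)$. For the \emph{local} term, Lemma~\ref{lem:optimal} (equivalently Cauchy--Schwarz) identifies $\hat\mu(a\mid s)\propto\pi(a\mid s)\sqrt{\hat q_\pi(s,a)}$ of (\ref{eq:propto}) as the minimiser of $\mathbb{E}_{a\sim\mu}[\rho_t^2\hat q_\pi(s,a)]$, and a direct computation gives $\mathbb{E}_{a\sim\pi}[\hat q_\pi(s,a)]-\mathbb{E}_{a\sim\hat\mu}[\rho_t^2\hat q_\pi(s,a)]=\mathbb{E}_{a\sim\pi}[\hat q_\pi(s,a)]-\bigl(\mathbb{E}_{a\sim\pi}[\sqrt{\hat q_\pi(s,a)}]\bigr)^2=c(s)$, which is $\geq0$ since it equals $\mathbb{V}_{a\sim\pi}(\sqrt{\hat q_\pi(s,a)})$ by concavity of $\sqrt{\cdot}$. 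The \emph{future} terms leave exactly the discounted, importance-weighted propagation $\gamma^2\mathbb{E}_{a\sim\hat\mu}[\rho_t^2\,\mathbb{E}_{s'\sim p}[\epsilon(s')]]$, yielding $\epsilon(s)=c(s)+\gamma^2\mathbb{E}_{a\sim\hat\mu}[\rho_t^2\mathbb{E}_{s'}[\epsilon(s')]]$, i.e.\ the displayed recursion with $c(s)$ as its base term. Because $\epsilon$ is the difference of two finite variances it is automatically well-defined, so no separate contraction argument is needed, and $\epsilon(s)\geq0$ follows from $c\geq0$.

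The hard part will be the control-variate baseline. Since $G^{\text{TIS}}$ subtracts $v_\pi$, the genuine local term is $(q_\pi(s,a)-v_\pi(s))^2$, whereas $\hat q_\pi$ contains $q_\pi^2(s,a)$; consequently the \emph{TIS}-variance-minimising policy is $\propto\pi\sqrt{(q_\pi-v_\pi)^2+\cdots}$ rather than the PDIS-optimal $\hat\mu\propto\pi\sqrt{\hat q_\pi}$ of (\ref{eq:propto}). Making the local step rigorous therefore requires reconciling these two objects, either by defining $\hat q_\pi$ with the baseline-subtracted first term or by an auxiliary inequality showing the baseline never increases the variance, so that the reduction is still lower-bounded by $c(s)$. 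A secondary point is propagating the bound while the future is evaluated under $\hat\mu$ rather than $\pi$ (the gap between the one-step design of (\ref{eq:onestepoptimal}) and global evaluation); this is handled by monotonicity of the recursion together with the induction hypothesis $\mathbb{V}_{\hat\mu}(G_{t+1}^{\text{TIS},1}\mid s')\le\mathbb{V}_\pi(G_{t+1}^{\text{TIS},1}\mid s')-\epsilon(s')$.
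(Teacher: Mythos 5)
Your route is, in skeleton, the same as the paper's: a law-of-total-variance recursion for $\mathbb{V}_\mu(G^{\text{TIS},1}_t\mid S_t=s)$, a per-state Cauchy--Schwarz/Lemma~\ref{lem:optimal} step at $\hat\mu$, and propagation of the per-state gain (the paper packages the propagation as uniqueness of the fixed point of a $\gamma^2$-contracting variance Bellman operator, you as an induction; this difference is cosmetic). Moreover your decomposition is correct: conditional on $(S_t,A_t)=(s,a)$ one has $\mathbb{E}_\mu[G^{\text{TIS},1}_t\mid s,a]=v_\pi(s)+\rho(s,a)\bigl(q_\pi(s,a)-v_\pi(s)\bigr)$, so the between-actions term is $\mathbb{E}_{a\sim\mu}[\rho^2(q_\pi-v_\pi)^2]$. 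The genuine gap is the one you concede in your final paragraph: you never close the local step, and it cannot be closed as stated. The $\mu$-dependent functional being importance-sampled is $h(s,a):=(q_\pi(s,a)-v_\pi(s))^2+\nu_\pi(s,a)+\gamma^2\mathbb{E}_{s'\sim p}[J_\pi(s')]$, which by Lemmas~\ref{lem:variancequality} and \ref{lem:qhattildedef} equals $\hat q_\pi(s,a)-2v_\pi(s)q_\pi(s,a)+v_\pi^2(s)\neq\hat q_\pi(s,a)$. Hence $\hat\mu\propto\pi\sqrt{\hat q_\pi}$ of (\ref{eq:propto}) is not the per-state minimiser, and a direct computation of the one-step gain of $\hat\mu$ over $\pi$ gives $c(s)-v_\pi^2(s)\bigl(1+Z(s)\,\mathbb{E}_{a\sim\pi}[\hat q_\pi(s,a)^{-1/2}]\bigr)+2v_\pi(s)Z(s)\,\mathbb{E}_{a\sim\pi}[q_\pi(s,a)\hat q_\pi(s,a)^{-1/2}]$ with $Z(s)=\mathbb{E}_{a\sim\pi}[\sqrt{\hat q_\pi(s,a)}]$ --- not $c(s)$, not bounded below by $c(s)$, and not even sign-definite. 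So neither of your proposed repairs is automatic (no ``auxiliary inequality showing the baseline never increases the variance'' holds at this generality), and your induction has no valid base inequality to propagate; defining $\epsilon:=J_\pi-J_{\hat\mu}$ is well-posed, as you say, but showing it satisfies the displayed recursion with base $c(s)$ is precisely the unresolved step.

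You should know, however, that this obstruction is not an artifact of your route: the paper's own proof does not surmount it, but steps around it invalidly. In the proof of Lemma~\ref{lem:variancebellman}, the conditional mean is simplified using ``$\mathbb{E}_\mu[\rho_t]=1$'' while conditioning on $(S_t,A_t)$; but given $(S_t,A_t)$ the ratio $\rho_t$ is deterministic, so $\mathbb{E}[\rho_t\mid S_t,A_t]=\rho_t\neq1$ and the cancellation $v_\pi(S_t)-\rho_t v_\pi(S_t)=0$ is false off-policy. This silently converts the true conditional mean $v_\pi+\rho_t(q_\pi-v_\pi)$ into $\rho_t q_\pi$, yielding the between-actions term $\mathbb{E}_{a\sim\mu}[\rho_t^2 q_\pi^2]-v_\pi^2$, which is correct for the un-baselined PDIS estimator of \citeauthor{liu2024efficient} but differs from the truth for $G^{\text{TIS},\lambda}$ by $2v_\pi\mathbb{E}_\mu[\rho_t^2 q_\pi]-v_\pi^2\mathbb{E}_\mu[\rho_t^2]-v_\pi^2$; this vanishes at $\mu=\pi$ (so Lemma~\ref{lem:variancequality} and the on-policy results survive) but not at $\hat\mu$, and the fixed-point verification in the proof of Theorem~\ref{thm:variancereduction} inherits the error. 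In short: your derivation is sound where the paper's is not, your flagged ``hard part'' is a real flaw in the theorem as proved rather than a missing trick, and a repair requires either redefining $\hat q_\pi$ (and hence $\hat\mu$) with the baseline-subtracted local term $(q_\pi-v_\pi)^2+\nu_\pi+\cdots$, or proving the claim for PDIS and separately controlling the control-variate terms introduced by subtracting $v_\pi$.
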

The proof can be found in Appendix \ref{sec:thmvariancereduction}. Noting here that $c(s)$ is always non-negative by Jensen's inequality, which guarantees non-negativity of $\epsilon(s)$ and thus the property of variance reduction. $c(s)=0$ occurs only in the degenerate case when all actions have the same $\hat q_{\pi}$ for state $s$. It remains to show how $\hat q_{\pi}$ can be learnt,
\begin{theorem} \label{thm:qhat}
Let,
\begin{equation}
    \hat r_{\pi}(s,a) := 2 r(s, a) q_{\pi}(s, a) - r^2(s,a)\label{eq:rhatdef}
\end{equation}
Then for any $\gamma \in [0, 1)$,
\begin{equation}
    \hat q_{\pi}(s, a) := \hat r_{\pi}(s, a) + \gamma^2\mathbb{E}_{A_t\sim\pi, S_{t+1}\sim p}[\hat q_{\pi}(S_{t+1}, A_t)] \label{eq:qhatdef}
\end{equation} 
\end{theorem}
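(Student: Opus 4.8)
The plan is to show that the quantity $\hat q_\pi(s,a)$ defined earlier is exactly the conditional second moment of the discounted return under the target policy, namely $\hat q_\pi(s,a) = \mathbb{E}_\pi[G_t^2 \mid S_t=s, A_t=a]$, and then observe that the claimed recursion is simply the Bellman equation for this second moment. Once the identification is in place, the recursion follows from expanding $G_t = r(S_t,A_t) + \gamma G_{t+1}$ and taking conditional expectations, so the real content lies in the identification step.

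First I would establish the identification. Starting from $\hat q_\pi(s,a) = q_\pi^2(s,a) + \nu_\pi(s,a) + \gamma^2\mathbb{E}_{s'\sim p}[\mathbb{V}_\pi(G^{\text{PDIS}}_{t+1}\mid S_{t+1}=s')]$, I note that under on-policy sampling the per-decision weights are all $1$, so $G^{\text{PDIS}}_{t+1}$ coincides with the plain return $G_{t+1}$ and $\mathbb{V}_\pi(G^{\text{PDIS}}_{t+1}\mid s') = \mathbb{V}_\pi(G_{t+1}\mid s')$. Using the law of total variance with conditioning on $S_{t+1}$, together with the fact that the reward $r(s,a)$ is deterministic given $(s,a)$ (so $\mathbb{V}_\pi(G_t\mid s,a) = \gamma^2\mathbb{V}_\pi(G_{t+1}\mid s,a)$), I would show
\[
\mathbb{V}_\pi(G_t \mid s,a) = \nu_\pi(s,a) + \gamma^2\,\mathbb{E}_{s'\sim p}\big[\mathbb{V}_\pi(G_{t+1}\mid S_{t+1}=s')\big],
\]
where the variance-of-the-mean term $\gamma^2\mathbb{V}_{s'\sim p}(v_\pi(s'))$ produced by the decomposition is precisely $\nu_\pi(s,a)$. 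Combining this with $\mathbb{E}_\pi[G_t^2\mid s,a] = \mathbb{V}_\pi(G_t\mid s,a) + q_\pi^2(s,a)$ yields the identification.

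Next, with $\hat q_\pi(s,a) = \mathbb{E}_\pi[G_t^2\mid S_t=s,A_t=a]$ in hand, I would derive the recursion directly. Writing $G_t = r(s,a) + \gamma G_{t+1}$ and squaring gives $G_t^2 = r^2(s,a) + 2\gamma\, r(s,a)\, G_{t+1} + \gamma^2 G_{t+1}^2$. Taking $\mathbb{E}_\pi[\,\cdot \mid s,a]$ and using $\mathbb{E}_\pi[G_{t+1}\mid s,a] = \mathbb{E}_{s'\sim p}[v_\pi(s')] = \gamma^{-1}(q_\pi(s,a) - r(s,a))$ collapses the cross term into $2 r(s,a) q_\pi(s,a) - 2 r^2(s,a)$, which combines with $r^2(s,a)$ to give exactly $\hat r_\pi(s,a) = 2 r(s,a) q_\pi(s,a) - r^2(s,a)$. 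The residual $\gamma^2\mathbb{E}_\pi[G_{t+1}^2\mid s,a]$ is handled by the tower property, conditioning on $(S_{t+1},A_{t+1})$ with $S_{t+1}\sim p(\cdot\mid s,a)$ and $A_{t+1}\sim\pi(\cdot\mid S_{t+1})$, so that $\mathbb{E}_\pi[G_{t+1}^2\mid s,a] = \mathbb{E}[\hat q_\pi(S_{t+1},A_{t+1})]$, reproducing the stated equation (with the expectation indices corrected to $A_{t+1}$).

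I expect the main obstacle to be the identification step rather than the algebra: one must recognize the on-policy PDIS return as the plain return, apply the law of total variance with the correct conditioning variable, and verify that the extra variance-of-the-mean term matches $\nu_\pi(s,a)$ exactly, so that the definition of $\hat q_\pi$ telescopes cleanly into a conditional second moment. After that, the recursion is a routine second-moment Bellman expansion, and the only care needed is bookkeeping of the expectation indices in the final term.
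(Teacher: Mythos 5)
Your proof is correct, but it takes a genuinely different route from the paper's. The paper never states the identification $\hat q_\pi(s,a)=\mathbb{E}_\pi[G_t^2\mid S_t=s,A_t=a]$; instead it works through the auxiliary pair $\tilde r_\pi$, $\tilde q_\pi$ defined in (\ref{eq:rtildedef})--(\ref{eq:qtildedef}), proves $\hat q_\pi=\tilde q_\pi+v_\pi^2$ (Lemma \ref{lem:qhattildedef}), and then runs a purely algebraic chain: add and subtract $v_\pi^2(s')$ inside the expectation, unfold $\tilde r_\pi$, expand $\nu_\pi$ via (\ref{eq:defnu}) as $\gamma^2\mathbb{E}_{s'}[v_\pi^2(s')]-(\gamma\mathbb{E}_{s'}[v_\pi(s')])^2$, and invoke the Bellman identity $\gamma\mathbb{E}_{s'}[v_\pi(s')]=q_\pi(s,a)-r(s,a)$ to collapse $-(q_\pi-r)^2+q_\pi^2$ into $2rq_\pi-r^2=\hat r_\pi$. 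You instead first show --- using on-policy PDIS equals the plain return, determinism of $r(s,a)$, and the law of total variance over $S_{t+1}$ --- that the main-text definition of $\hat q_\pi$ telescopes into the conditional second moment of the return, after which (\ref{eq:qhatdef}) is just the second-moment Bellman equation obtained by squaring $G_t=r+\gamma G_{t+1}$; your cross-term collapse uses the very same Bellman identity as the paper, merely packaged as $r^2+2r(q_\pi-r)=2rq_\pi-r^2$. Both arguments are sound and rest on the same two ingredients (law of total variance and the Bellman equation), but yours is self-contained and more interpretable, making explicit \emph{why} $\hat q_\pi$ is a $q$-function for the surrogate reward $\hat r_\pi$ --- it is the second moment of the return --- whereas the paper's version buys consistency with its variance-Bellman-operator machinery, since $\tilde q_\pi$ is exactly the object reused in Lemma \ref{lem:variancequality} and Theorem \ref{thm:variancereduction}. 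Indeed your identification is confirmed by the paper's own lemmas: combining Lemma \ref{lem:variancequality} with Lemma \ref{lem:qhattildedef} gives $\mathbb{E}_{a\sim\pi}[\hat q_\pi(s,a)]=\mathbb{V}_\pi(G_t\mid S_t=s)+v_\pi^2(s)$, the second moment at $s$. Two minor points: the $\gamma^{-1}$ in your cross term is unnecessary (and formally undefined at $\gamma=0$) --- write $2\gamma r\,\mathbb{E}_{s'}[v_\pi(s')]=2r(q_\pi(s,a)-r(s,a))$ directly so the cancellation is manifest; and you correctly flag the index slip in (\ref{eq:qhatdef}), where $A_t$ should be the next-step action $A_{t+1}\sim\pi(\cdot\mid S_{t+1})$, which the paper's appendix proof silently fixes by writing $a'\sim\pi$.
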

The proof here is a straightforward adaptation of \cite{liu2024efficient} and intermediate results from the other theorems (c.f., Appendix \ref{sec:thmqhat}). Theorem \ref{thm:qhat} demonstrates that $\hat q_{\pi}$ corresponds to the state-action value function of the policy $\pi$ but for a different reward function $\hat r$ (c.f., (\ref{eq:rhatdef})). Thus, designing a lower-variance behaviour policy boils down to learning an additional Q-value network to estimate $\hat q_{\pi}$ (also $q_{\pi}$ -- if not already available) and optimizing the behaviour policy $\mu$ so that it matches (\ref{eq:propto}).

\section{Implementation}
\label{sec:implementation}

We now provide a practical implementation of variance reduction via BPO. We implement BPO on top of two policy-gradient algorithms: REINFORCE \cite{williams1992simple} and PPO \cite{schulman2017proximalpolicyoptimizationalgorithms}. In both cases the underlying algorithm remains unchanged, we simply introduce three additional features: (i) two additional Q-value networks are updated with several epochs of batch updates (via FQE \cite{le2019batch}) to provide estimates for $q_{\pi}$ and $\hat q_{\pi}$; (ii) the behaviour policy is updated with several epochs of batch updates to match the target distribution in (\ref{eq:propto}); (iii) the behaviour policy is then used to collect rollouts and the return estimates are computed with (\ref{eq:tislamda}) -- correctly accounting for the mismatch between the behaviour policy and target policy, and truncating IS weights to reduce variance. Algorithm \ref{alg:ppofqe} outlines a general instantiation of BPO implemented on top of a generic policy-gradient algorithm. We continue this section by describing the important implementation details. 

\subsection{Integration with Policy Gradient}
Policy gradient methods directly optimize the policy given as a parametrized function $\pi_{\theta}$, where $\theta$ denotes the policy parameters. The policy is updated with gradient steps using the policy gradient \cite{sutton2018reinforcement}:
\begin{equation}
    \nabla J(\theta) \propto \mathbb{E}_{\pi_{\theta}} \big[ G_t \nabla\ln{\pi_{\theta}(A_t \vert S_t)} \big] \label{eq:policygradient}
\end{equation}

In PPO \cite{schulman2017proximalpolicyoptimizationalgorithms} an additional parametrized value function $V_{\omega}$ is learnt, where $\omega$ denotes the parameters of the value function, and the policy parameters are updated using the clipped surrogate objective function,
\begin{equation}
    L^{\text{clip}}(\theta) = \hat{\mathbb{E}}_{t} \big[\min (r_t(\theta)\hat A_t, \text{clip}( r_t(\theta), 1-\epsilon, 1+ \epsilon)\hat A_t)\big]
    \label{eq:clippedobjective}
\end{equation}
where $\hat{\mathbb{E}}$ corresponds to the empirical expectation under the rollouts. The ratios $r_t(\theta)$ are calculated differently here as $\mu$ is used to collect rollouts (c.f., line 3 in Algorithm \ref{alg:ppofqe}), that is, $r_t(\theta) = \frac{\pi_{\theta}(A_t \vert S_t)}{\mu(A_t \vert S_t)}$ corresponds to the ratio between the current policy $\pi_{\theta}$ and the behaviour policy $\mu$. Furthermore, the advantage estimates are modified to use the truncated IS TD($\lambda$) returns, 
\begin{equation}
    \hat A_t := \hat G^{\text{TIS}, \lambda}_t - V_{\omega}(S_t)
\end{equation}
where $\hat G^{\text{TIS}, \lambda}_t$ denotes the truncated IS TD($\lambda$) returns (c.f., \ref{eq:tislamda}), but with value estimates given by $V_{\omega}$. 

The value function $V_{\omega}$ is updated in tandem to minimize the squared error between its predictions and the truncated IS TD($\lambda$) estimates,
\begin{equation}
    L^{\text{value}}(\omega) = \frac{1}{2}\hat{\mathbb{E}}_t\Big[ \big(V_{\omega}(S_t) - \hat G^{\text{TIS}, \lambda}_t \big)^2 \Big] \label{eq:valueloss}
\end{equation}
These objectives are typically optimized jointly with an additional entropy bonus, $L^{\text{joint}}(\theta, \omega)= L^{\text{clip}}(\theta) + \beta_{\text{value}}\cdot L^{\text{value}}(\omega) + \beta_{\text{ent}} \cdot L^{\text{ent}}(\theta)$, where $\beta_{\text{value}}$ and $\beta_{\text{ent}}$ are coefficients providing the relative weightings for each objective. 

\begin{algorithm}[t]
\caption{BPO}
    \label{alg:ppofqe}
\begin{algorithmic}[1]
\State \textbf{Initialize} $\theta$, $\omega$, $\zeta$, $\hat \zeta$, $\xi$ and empty replay buffer $D$
\For{$\text{phase}= 0, 1, \ldots$}
\State Generate $S_0A_0R_1\ldots S_{T-1}A_{T-1}R_{T-1}$ with $\mu$
\State Add experience tuples $(s_i, a_i, r_i, s'_{i})$ to $D$
\State \emph{// Policy gradient update}
\Statex \quad \dots \dots
\Statex \quad \dots \dots
\Statex \quad \dots \dots
\State \emph{// Auxilliary steps}
\For{$\text{epoch} = 0,1, \ldots \text{n\_qvf\_epochs}$}
 \State Optimize $Q_{\zeta}$ with FQE (c.f., (\ref{eq:qlossfn}))
\EndFor
\State Compute $\hat r_t$ (c.f., (\ref{eq:rhatdef}))
\For{$\text{epoch} = 0,1, \ldots \text{n\_qvf\_epochs}$}
    \State Optimize $\hat Q_{\hat\zeta}$ with FQE (c.f., (\ref{eq:qhatlossfn}))
\EndFor
\For{$\text{epoch} = 0,1, \ldots \text{n\_mu\_epochs}$}
    \State Optimize behaviour policy $\mu_{\xi}$ (c.f., (\ref{eq:mudiscloss}) or (\ref{eq:mucontloss}))
\EndFor
\EndFor
\end{algorithmic}
\end{algorithm}

\subsection{Fitted Q-evaluation}

Recall that to design variance optimal behaviour policies we need to estimate the state-action value function $\hat q_{\pi}$, which includes the term $q_{\pi}$ in its definition (c.f., \ref{eq:qhatdef}). Thus we train two additional parametrized functions, $Q_{\zeta}$ and $\hat Q_{\hat \zeta}$.

For training both $Q_{\zeta}$ and $\hat Q_{\hat \zeta}$ we use FQE \cite{le2019batch} -- a method for learning the state-action value functions of arbitrary policies from an offline dataset. In our instance, the offline dataset corresponds to a relatively small replay buffer $D$ containing tuples $\{(s_i, a_i, r_i, s'_i)\}^{m}_{i=1}$ of the most recent rollout data collected by $\mu$. The two Q-networks, $Q_{\zeta}$ and $\hat Q_{\hat \zeta}$ are updated for several epochs using one-step Bellman targets computed from batches of data sampled from the replay buffer $D$. 

First $Q_{\zeta}$ is updated to estimate $q_{\pi}$, using the following loss function,
\begin{equation}
    L^{FQE}(\zeta) = \frac{1}{2} \hat{\mathbb{E}}_t \Big[  \big( Q_{\zeta}(S_t, A_t) - Q^{\text{targ}}_t \big)^2\Big] \label{eq:qlossfn}
\end{equation}
where $Q^{\text{targ}}_t = r_t + \gamma \mathbb{E}_{A_{t+1} \sim\pi_{\theta}} \big[Q_{\zeta}(s'_t, A_{t+1}) \big]$ corresponds to the one-step Bellman targets. Once $Q_{\zeta}$ is trained then the rewards $\hat r_t$ are calculated with (\ref{eq:rhatdef}) and the tuples are updated, i.e., $D=\{(s_i, a_i, \hat r_i, s'_i)\}^{m}_{i=1}$. The loss function for $\hat Q_{\hat \zeta}$ is,
\begin{equation}
    \hat L^{FQE}(\hat \zeta) = \frac{1}{2} \hat{\mathbb{E}}_t \Big[  \big( \hat Q_{\hat \zeta}(S_t, A_t) - \hat Q^{\text{targ}}_t \big)^2\Big] \label{eq:qhatlossfn}
\end{equation}
except $r_t$, $Q_{\zeta}$ and $\gamma$, are replaced by $\hat r_t$, $\hat Q_{\hat \zeta}$ and $\gamma^2$ for computing the targets, $\hat Q^{\text{targ}}_t = \hat r_t + \gamma^2 \mathbb{E}_{A_{t+1} \sim\pi_{\theta}} \big[ \hat Q_{\zeta}(s'_t, A_{t+1}) \big]$.

\paragraph{Stabilizing learning.} The stability of the estimates for $q_{\pi}$ and $\hat q_{\pi}$ are critical to BPO. Value-based approaches, including FQE, are known to be prone to overestimation and approximation bias \cite{fujimoto2022should}. To stabilize learning we use \emph{symlog targets} \cite{hafner2024mastering}, where the targets $Q^{\text{targ}}_t$ and $\hat Q^{\text{targ}}_t$ are transformed with the symmetric log: $\text{symlog}(x) =\text{sign}(x)\ln(\vert x\vert+1)$. To preserve their magnitudes the predictions are transformed back with the symmetric exp: $\text{symexp}(x) = \text{sign}(x)( \exp(\vert x \vert) - 1)$. Symlog targets squash large magnitudes allowing the two networks to quickly move their predictions to large values when needed. This is crucial as the relative magnitude between $Q_{\zeta}$ and $\hat Q_{\hat \zeta}$ predictions can be large and symlog targets allow us to use the same hyperparameters (e.g., learning rate) for both $Q_{\zeta}$ and $\hat Q_{\hat \zeta}$ and keep the learning dynamics consistent for both networks. Further techniques used to deal with overestimation and stability issues are detailed in Appendix \ref{sec:additionalimplementation}.

\subsection{Behaviour Policy}

The behaviour policy is also a parametrized function $\mu_{\xi}$ with parameters $\xi$. Generally we keep the architecture between $\pi_{\theta}$ and $\mu_{\xi}$ consistent, and initialize $\theta$ and $\xi$ with the same seed, so they are identical at the start of training, although they will likely diverge after just one gradient step.  

\paragraph{Discrete action spaces. } For the discrete case, the target distribution (c.f., (\ref{eq:propto})) can be computed analytically, thus we could sample directly proportional to (\ref{eq:propto}), however to alleviate any possible issues arising from overestimation we still train a separate parametrized behaviour policy $\mu_{\xi}$ using the cross entropy loss to the target distribution:
\begin{equation}
    L_{\mu}^{\text{disc}}(\xi) = -\hat{\mathbb{E}}_t \Big[ \sum_{a \in \mathcal{A}} \mu_{\xi}(a \vert S_t) \ln q (a \vert S_t)  \Big] \label{eq:mudiscloss}
\end{equation}
$q(a \vert s) = \pi_{\theta}(a \vert s) \sqrt{\hat Q_{\hat \zeta} (s, a)} / \sum_{a' \in A} \pi_{\theta}(a' \vert s) \sqrt{\hat Q_{\hat \zeta} (s, a')}$ is the target distribution computed analytically.

\paragraph{Continuous action spaces. } For the continuous case, the target distribution (c.f., (\ref{eq:propto})) cannot be computed analytically without making distributional assumptions on $\hat q_{\pi}$. We still seek to minimize the log probability distance between $\mu(a \vert s)$ and $\pi(a \vert s) \textstyle{\sqrt{\hat q_{\pi}(s, a)}}$ (c.f., (\ref{eq:propto})). We design the following loss function,
\begin{multline}
    L_{\mu}^{\text{cont}}(\xi) = \hat{\mathbb{E}}_t \Big[ \ln \mu_{\xi}(A_t \vert S_t) - \ln \pi_{\theta}(A_t \vert S_t) \\ - \frac{1}{2} \ln \hat Q_{\hat \zeta}(S_t, A_t) \Big] \label{eq:mucontloss}
\end{multline}
We establish the following result for this loss function, the proof can be found Appendix \ref{sec:thmpolicymatching}.
\begin{theorem} \label{thm:policymatching} If $\mu_{\xi}(a \vert s) \propto \pi_{\theta}(a \vert s) \textstyle{\sqrt{\hat Q_{\hat \zeta}(s, a)}}$ (i.e., matches (\ref{eq:propto})), then the loss $L^{\text{cont}}_{\mu}(\xi)$ is minimized. 
\end{theorem}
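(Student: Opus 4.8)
The plan is to recognize the loss $L_{\mu}^{\text{cont}}(\xi)$ as, up to an additive term that does not depend on $\xi$, the (reverse) Kullback--Leibler divergence between $\mu_{\xi}(\cdot \mid s)$ and the target distribution appearing in (\ref{eq:propto}), and then to invoke non-negativity of the KL divergence to pin down its minimizer. This mirrors the standard argument for cross-entropy objectives, adapted to the fact that the ``target'' here is an unnormalized density $\pi_{\theta}(a\mid s)\sqrt{\hat Q_{\hat \zeta}(s,a)}$.

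First I would fix a state $s$ and define the normalized target $q^*(a \mid s) := \pi_{\theta}(a \mid s)\sqrt{\hat Q_{\hat \zeta}(s,a)} / Z(s)$, where $Z(s) := \int_{\mathcal{A}} \pi_{\theta}(a' \mid s)\sqrt{\hat Q_{\hat \zeta}(s,a')}\,da'$ is the normalizing constant. The central algebraic observation is the identity $\ln \pi_{\theta}(a \mid s) + \tfrac{1}{2}\ln \hat Q_{\hat \zeta}(s,a) = \ln q^*(a \mid s) + \ln Z(s)$, obtained by taking logarithms in the definition of $q^*$. Substituting this into the integrand of the loss replaces the terms $-\ln \pi_{\theta}(A_t\mid S_t) - \tfrac{1}{2}\ln \hat Q_{\hat \zeta}(S_t,A_t)$ by $-\ln q^*(A_t \mid S_t) - \ln Z(S_t)$. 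Since the empirical expectation $\hat{\mathbb{E}}_t$ averages over transitions whose actions satisfy $A_t \sim \mu_{\xi}(\cdot \mid S_t)$, grouping by state yields
\begin{align*}
    L_{\mu}^{\text{cont}}(\xi) &= \mathbb{E}_{S_t}\Big[ \mathbb{E}_{A_t \sim \mu_{\xi}}\big[\ln \mu_{\xi}(A_t\mid S_t) - \ln q^*(A_t\mid S_t)\big] \Big] - \mathbb{E}_{S_t}\big[\ln Z(S_t)\big] \\
    &= \mathbb{E}_{S_t}\Big[ D_{\text{KL}}\big(\mu_{\xi}(\cdot\mid S_t)\,\|\,q^*(\cdot\mid S_t)\big) \Big] - \mathbb{E}_{S_t}\big[\ln Z(S_t)\big].
\end{align*}

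I would then conclude as follows. The quantity $Z(s)$ depends only on $\pi_{\theta}$ and $\hat Q_{\hat \zeta}$ and not on $\xi$, so the second term is constant in the optimization variable. By Gibbs' inequality each per-state divergence $D_{\text{KL}}(\mu_{\xi}(\cdot\mid s)\,\|\,q^*(\cdot\mid s))$ is non-negative and equals zero if and only if $\mu_{\xi}(\cdot\mid s) = q^*(\cdot\mid s)$ almost everywhere. Hence choosing $\mu_{\xi}(a\mid s) \propto \pi_{\theta}(a\mid s)\sqrt{\hat Q_{\hat \zeta}(s,a)}$ drives every per-state KL term to zero simultaneously and attains the infimum of $L_{\mu}^{\text{cont}}$, which proves the claim.

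The main obstacle is conceptual rather than computational: the sampling distribution of the states $S_t$ is itself induced by $\mu_{\xi}$, so the outer expectation depends on the very parameter being optimized. The cleanest way to handle this is to treat $\hat{\mathbb{E}}_t$ as the empirical expectation over a \emph{fixed} batch of collected transitions, so that the states are regarded as fixed inputs and the minimization is carried out pointwise per state; the pointwise optimum is independent of the state marginal, so matching the target at every state is optimal regardless of how the states are weighted. The only remaining detail requiring care is well-definedness of $\ln \hat Q_{\hat \zeta}$ and $\sqrt{\hat Q_{\hat \zeta}}$, which needs $\hat Q_{\hat \zeta}(s,a) > 0$ and finiteness of $Z(s)$; the former holds because $\hat q_{\pi}$ is a value function for the non-negative reward $\hat r_{\pi}$ (c.f.\ Theorem \ref{thm:qhat}) and the symlog parameterization preserves positivity, while the latter is immediate whenever the estimate is bounded on the support of $\pi_{\theta}$.
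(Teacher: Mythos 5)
Your proof is correct and takes essentially the same route as the paper: you rewrite the loss per state as $D_{\text{KL}}\big(\mu_{\xi}(\cdot\mid s)\,\Vert\, q^{*}(\cdot\mid s)\big) - \ln Z(s)$ with the normalized target $q^{*}(a\mid s)\propto \pi_{\theta}(a\mid s)\sqrt{\hat Q_{\hat\zeta}(s,a)}$, invoke Gibbs' inequality, and observe (as the paper does) that the empirical state weighting only contributes $\xi$-independent constants, so the pointwise per-state optimum is global. One minor caveat in your well-definedness remark: $\hat r_{\pi}(s,a)=2r(s,a)q_{\pi}(s,a)-r^{2}(s,a)$ need not be pointwise non-negative, and symlog/symexp preserve sign rather than positivity; non-negativity of $\hat q_{\pi}$ instead follows directly from its definition as a sum of a square and variance terms, while the paper simply assumes non-negativity of the learned estimate $\hat Q_{\hat\zeta}$.
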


\section{Experimental Results}
\label{sec:results}

In this section we present our experimental results. We start with REINFORCE \cite{williams1992simple}, which is a simple algorithm that uses the full Monte Carlo returns (i.e., $\lambda=1$) to update the policy with the policy gradient (c.f., (\ref{eq:policygradient})). Given the relatively simplicity of REINFORCE we test our approach on a simple, yet non-trivial toy example from \cite{sutton2018reinforcement}: the \texttt{ShortCorridor} grid world. Given the example is simple, the gains here are marginal, and these results are simply to demonstrate that we can use our method for a variety of policy-gradient algorithms. We then continue by presenting our results for BPO on much more complex environments, where the underlying algorithm we build on top of PPO \cite{schulman2017proximalpolicyoptimizationalgorithms} -- a more sophisticated and widely used algorithm. Additional details of both the REINFORCE and PPO update are in Appendix \ref{sec:listings}.

\begin{figure}[b]
\begin{center}
\includegraphics[width=0.75\columnwidth]{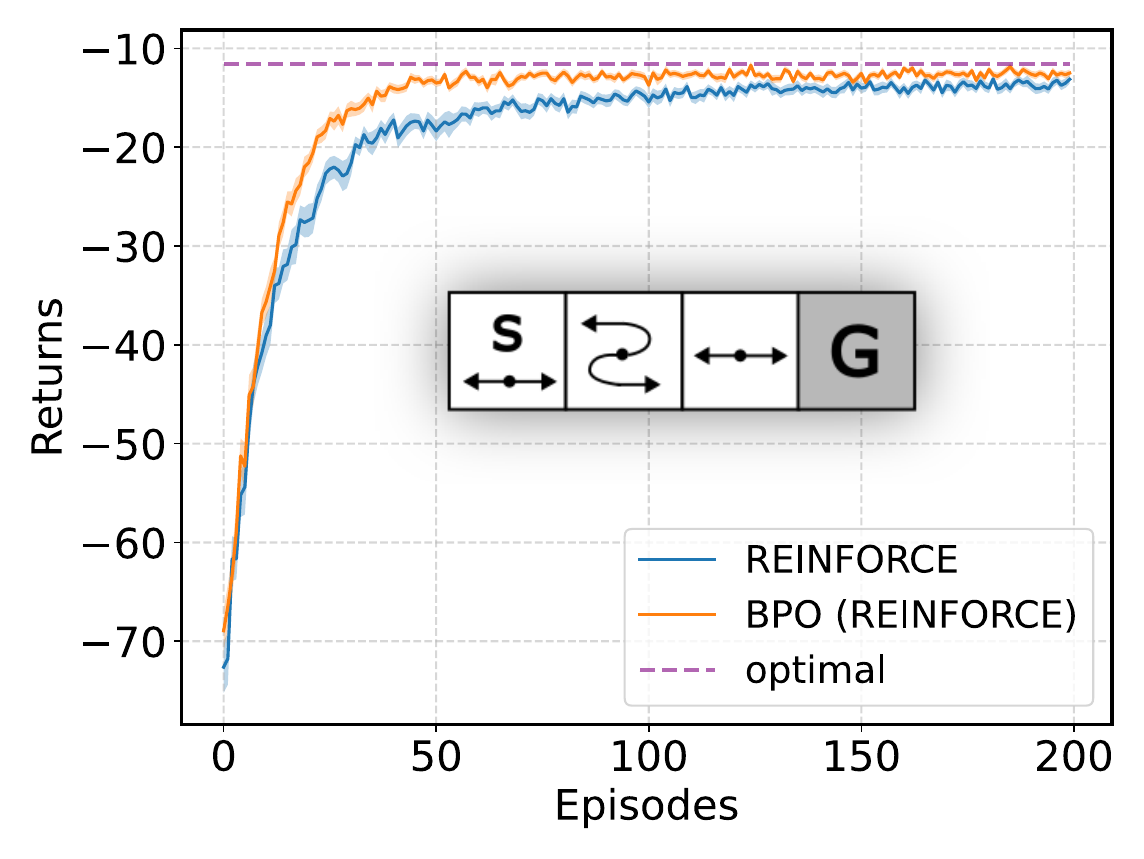}
\caption{\texttt{ShortCorridor} environment (center) and mean returns (10 eval episodes); averaged over 100 independent runs with standard error (SE) bars reported.}
\label{fig:reinforce}
\end{center}
\end{figure}

\begin{figure*}[t]
\begin{center}
\includegraphics[width=0.88\linewidth]{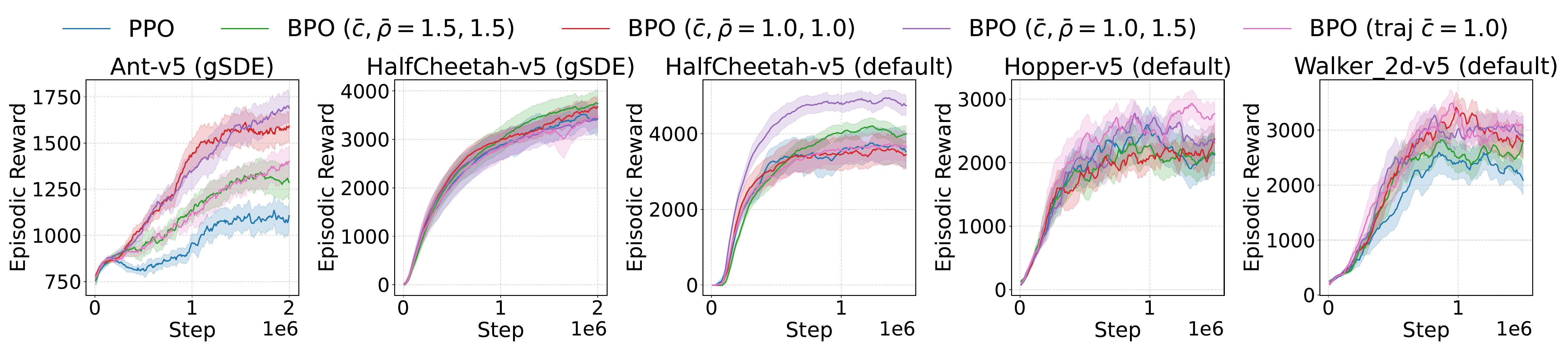}
\caption{MuJoCo mean returns (10 eval episodes); averaged over 10 independent runs with SE bars reported.}
\label{fig:ppo}
\end{center}
\end{figure*}

\begin{table*}[t]
\begin{center}
\begin{tabular}{lccccc}
\toprule
& Ant-v5 & \multicolumn{2}{c}{HalfCheetah-v5} & Hopper-v5 & Walker2d-v5 \\
& \texttt{(gSDE)} & \texttt{(gSDE)} & \texttt{(default)}& \texttt{(default)} & \texttt{(default)} \\
\midrule
PPO & $ 1106 \pm 111 $ & $ 3425 \pm 468 $ & $ 3527 \pm 670 $ &  $ 2126 \pm 492 $ & $ 2091 \pm 408 $\\
BPO ($\bar c, \bar \rho = 1.5,1.5$) & $\bm{1287 \pm 148}$ & $\mathit{3742 \pm 408}$ &  $\bm{3999 \pm 320}$&  $\mathit{2143 \pm 422}$  & $\bm{2782 \pm 456}$ \\
BPO ($\bar c, \bar \rho = 1.0,1.0$) &  $\bm{1592 \pm 126}$ & $\mathit{3674 \pm 321}$ &   $ 3449 \pm 550 $ &  $\mathit{2373 \pm 377}$ & $\bm{2805 \pm 470}$ \\
BPO ($\bar c, \bar \rho = 1.0,1.5$) & $\bm{1690 \pm 125}$ & $\mathit{3431 \pm 431}$ & $\bm{4749 \pm 419}$ & $\mathit{2316 \pm 326}$ & $\bm{2898 \pm 513}$ \\
BPO (\texttt{traj} $\bar c = 1.0$) & $\bm{1400 \pm 122}$ & $\mathit{3477 \pm 356}$&  $\mathit{3656 \pm 588}$ & $\bm{2770 \pm 296}$ & $\bm{3044 \pm 332}$ \\
\bottomrule 
\end{tabular}
\caption{MuJoCo mean returns (10 eval episodes after training) with two hyperparameter settings: (\texttt{default}) and (\texttt{gSDE}); \textbf{bold} denotes a statistically significant improvement over the baseline; \textit{italic} denotes non-statistically significant improvements.} 
\label{tab:mujocoresults}
\end{center}
\end{table*}

\paragraph{REINFORCE.} The action space for \texttt{ShortCorridor} grid world (c.f., Figure \ref{fig:reinforce}) is $\mathcal{A} =\{ \text{left}, \text{right}\}$. All states, $\mathcal{S} =\{0,1,2,3\}$ are identical under function approximation, $\mathbf{x}(s, \text{left}) = [0,1]^T$ and $\mathbf{x}(s, \text{right}) = [1,0]^T$. The problem is challenging as in state $1$ the left and right actions are reversed. Given that the agent can't distinguish states, the optimal policy is a stochastic policy picking right with probability $0.59$, achieving an optimal expected return of $-11.6$. 
We compare BPO with REINFORCE to standard REINFORCE \cite{williams1992simple}. Both the learning rate and underlying update rule for $\pi_{\theta}$ are fixed for a fair comparison. We see consistent policy improvement for both algorithms in the early stages of training, in the later stages of training BPO is more stable and converges faster, indicating the behaviour policy $\mu$ has converged to the optimal sampling distribution. The best truncation parameters we found here were $\bar c=1.0$ and $\bar \rho = 1.5$. Even though the advantage here is small, BPO appears more stable towards the end of training, thus we could possibly push the learning rate further to its limit. 

\paragraph{PPO.} We now evaluate BPO on several MuJoCo \cite{todorov2012mujoco} environments provided by Gymnasium \cite{towers2025gymnasium}, testing also, different hyperparameter settings for the underlying PPO algorithm. In particular we test with the standard hyperparameter settings (\texttt{default}) provided in \cite{schulman2017proximalpolicyoptimizationalgorithms}, we also test with generalized state-dependent exploration (\texttt{gSDE}) \cite{raffin2022smooth}. All hyperparameter settings are detailed in Appendix \ref{sec:additionalimplementation}, we note that for a fair comparison between BPO and PPO, we keep the underlying PPO parameters identical and we only tune the BPO specific hyperparameters (e.g., $\bar c$, $\bar \rho$), although in practice one might be able to push the PPO hyperparameters further (e.g., $\lambda \to 1$) as BPO reduces variance. The results are summarized in Table \ref{tab:mujocoresults}. We also provide complete learning curves for each environment, see Figure \ref{fig:ppo}.

\paragraph{Discussion.} In our experiments we search over various hyperparameters settings for the truncation parameters $\bar c$ and $\bar \rho$, we also experiment with trajectory level truncation (\texttt{traj}), where the full product $(\prod^{k-1}_{i=t}c_i)\rho_k$ in (\ref{eq:tislamda}) is truncated with $\bar c$, rather than truncating each individual IS ratio. The advantage of this, is ratios greater than $\bar c$ can propagate through the trajectory, but the product is always truncated for stable learning. Again, this can be efficiently computing by stepping backward through time. In Table \ref{tab:mujocoresults} we see that almost all configurations of BPO improve upon the baseline (PPO), and in most cases these improvements are statistically significant. Studying Figure \ref{fig:ppo} we see that BPO often exhibits faster convergence at the start of training and is more stable towards the end, demonstrating the well-designed behaviour policies can not only theoretically but empirically improve the convergence and stability of online RL, by reducing the variance of the returns estimator. We conduct some ablations for BPO, to observe the effect of removing the $\hat q_{\pi}$ estimate from the loss function in (\ref{eq:mucontloss}) and investigating the effect of \textit{symlog targets}. For these results please see Appendix \ref{sec:ablationstudies}.

\section{Conclusion}

In this paper we demonstrate that well designed behaviour policies can provably reduce the variance of practical returns estimators for online RL. We develop a sound methodology for training variance-reducing behaviour policies which are used to collect off-policy data for concurrently optimizing a target policy. Empirically, we show that our approach enables stable and more efficient off-policy RL, compared to the on-policy counterparts. Directions for future work include, extending our methodology to other policy-gradient algorithms, e.g., A2C \cite{mnih2016asynchronous}. It would also be worth exploring if our methodology is useful for value-based methods, e.g., DDPG \cite{silver2014deterministic}, TD3 \cite{fujimoto2018addressing} and SAC \cite{haarnoja2018soft}. We might also use model-based RL \cite{sutton1990integrated} to assist in designing globally optimal behaviour policies.

\section*{Acknowledgments} The research described in this paper was partially supported by the EPSRC (grant number EP/X015823/1).

\bibliography{aaai2026}

\newpage
\onecolumn
\appendix

\section{Technical Proofs}

\subsection{Preliminary Lemmas}

We start this section by providing proofs for the preliminary Lemmas: Lemma \ref{lem:optimal} and Lemma \ref{lem:zero} from (\citeauthor{owen2013monte} \citeyear{owen2013monte}, Ch.~9.1).

\begin{restatedlem}{\ref{lem:optimal} (restated) (\citeauthor{owen2013monte} \citeyear{owen2013monte}, Ch.~9.1)} $\forall x \in \mathcal{X}$ let $q^*(x) \propto p(x)\vert f(x)\vert$. Then $q^*$ is an optimal solution to (\ref{eq:problem}). 
\end{restatedlem}
\begin{proof}
First, for a given $p$ and $f$, define,
\begin{equation*}
    \mathcal{X}_{+} = \{ x \mid p(x)f(x) =0 \} 
\end{equation*}
For any sampling distribution $q \in \Lambda$, the variance of $I_{\text{IS}} = \rho(x)f(x)$, is given by,
\begin{align*}
    &\mathbb{V}_{X \sim q}(\rho(X)f(X))\\
    & = \mathbb{E}_{X \sim q}[(\rho(X)f(X))^2] - \left( \mathbb{E}_{X \sim q}[\rho(X)f(X)] \right)\\
    & = \mathbb{E}_{X \sim q}[(\rho(X)f(X))^2] - \left( \mathbb{E}_{X \sim p}[f(X)] \right) \tag*{(change of measure)}\\
    & = \int_{x: q(x) > 0, x\in \mathcal{X}_+} \frac{p^2(x)f^2(x)}{q(x)} - \left( \mathbb{E}_{X \sim p}[f(X)] \right) \tag*{(\text{$p(x)f(x)=0$ for all $x \not \in \mathcal{X}_{+}$})}\\
    & = \int_{x\in \mathcal{X}_+} \frac{p^2(x)f^2(x)}{q(x)} - \left( \mathbb{E}_{X \sim p}[f(X)] \right)\\
\end{align*}
The second term is now unrelated to the sampling distribution $q$, thus the optimization problem in (\ref{eq:problem}) can be equivalently written as,
\begin{equation*}
    \min_{q \in \Lambda} \int_{x\in \mathcal{X}_+} \frac{p^2(x)f^2(x)}{q(x)}
\end{equation*}
In the case that $|\mathcal{X}_+|=0$, then the variance is always zero and so any $q$ is optimal. Thus, we continue for $|\mathcal{X}_+|>0$, we note that $\Lambda$ here can be equivalently expressed as,
\begin{equation*}
    \Lambda = \{ q \in \Delta(\mathcal{X}) \mid \forall x \in \mathcal{X}_{+}, q(a) > 0 \}
\end{equation*}
Thus the optimization problem can be written again equivalently as,
\begin{align}
\begin{split}
    &\min_{q \in \Delta(\mathcal{X})} \int_{x\in \mathcal{X}_+} \frac{p^2(x)f^2(x)}{q(x)}\\
    & \text{subject to}\quad p(x) > 0\quad \forall x \in \mathcal{X}_+
\end{split} \label{eq:equivproblem}
\end{align}
It is not to hard to see that any optimal solution for (\ref{eq:equivproblem}) must put all its probability density on $\mathcal{X}_+$, since if $\int_{x\in \mathcal{X}_+} q(x) < 1$, then there must be some $x \not \in \mathcal{X}_{+}$ with density $p(x) > 0$, since $x$ does not contribute to the integral in the objective function, one can always decrease the objective by putting more density on some $x' \in \mathcal{X}_+$. Thus, we write the following problem,
\begin{align}
    \begin{split}
    &\min_{z \in \Delta(\mathcal{X}_+)} \int_{x\in \mathcal{X}_+} \frac{p^2(x)f^2(x)}{z(x)}\\
    & \text{subject to}\quad z(x) > 0 \quad \forall x \in \mathcal{X}_+
\end{split} \label{eq:equivproblem2}
\end{align}
If $z^*$ is an optimal solution to (\ref{eq:equivproblem2}) then an optimal solution to (\ref{eq:equivproblem}) can be constructed as follow,s
\begin{equation}
    q^*(x) = \begin{cases}
        z^*(x) & \text{if $x \in \mathcal{X}_+$}\\
        0 & \text{otherwise}
    \end{cases} \label{eq:cases}
\end{equation}
For any $z$, such that $\forall x \in \mathcal{X}_+,z(x) > 0 $, the Cauchy-Schwarz inequality shows that,
\begin{equation*}
    \left(\int_{x \in \mathcal{X}_+} \frac{p^2(x)f^2(x)}{z(x)} \right) \left( \int_{x \in \mathcal{X}_+} z(x) \right) \geq \left( \int_{x \in \mathcal{X}_+} \frac{p(x)\vert f(x)\vert}{\sqrt{z(x)}}\textstyle{\sqrt{z(x)}} \right)^2 = \left( \int_{x \in \mathcal{X}_+} p(x)\vert f(x)\vert\right)^2
\end{equation*}
with equality holding for,
\begin{equation*}
    z^*(x) = \frac{p(x)\vert f(x) \vert}{\int_{x' \in \mathcal{X}}p(x)\vert f(x)\vert} > 0 \tag*{(since \text{$\vert \mathcal{X}_+ \vert > 0$})}
\end{equation*}
Since $\int_{x\in \mathcal{X}_+} z^*(x) =1$, we conclude $z^*$ is an optimal solution for (\ref{eq:equivproblem2}) and thus an optimal solution $q^*$ for (\ref{eq:equivproblem}) can be constructed by (\ref{eq:cases}). Noting that $p(x)\vert f(x) \vert=0$ for $x \not \in \mathcal{X}_+$, then $q^*$ can be expressed as,
\begin{equation*}
    q^* = \frac{p(x)\vert f(x) \vert}{\int_{x' \in \mathcal{X}}p(x')\vert f(x') \vert}
\end{equation*}
\end{proof}

\begin{restatedlem}{\ref{lem:zero} (restated) (\citeauthor{owen2013monte} \citeyear{owen2013monte}, Ch.~9.1)}
If $\forall x \in \mathcal{X}, f(x) \geq 0$ or $\forall x \in \mathcal{X}, f(x) \leq 0$, then $\Lambda = \Lambda_{+}$ (where $\Lambda_{+}$ denotes the set of all distributions giving unbiased estimators), and $q^*$ (defined in Lemma \ref{lem:optimal}) gives zero variance, i.e., $\mathbb{V}_{q^*}(I_{\text{IS}}) =0$. 
\end{restatedlem}
\begin{proof} We start by providing the formal definition for $\Lambda_+$,
\begin{equation}
    \Lambda_+ = \{ q \in \Delta(\mathcal{X}) \mid \mathbb{E}_{X \sim q}[\rho(X) f(X)] = \mathbb{E}_{X \sim p}[f(X)]\}
\end{equation}
which corresponds to all sampling distributions $q$ that give unbiased estimators for $\mathbb{E}_{X \sim p}[f(X)]$. The $q \in \Lambda \Rightarrow q \in \Lambda_+$ direction is straightforward,
\begin{align*}
    \mathbb{E}_{X \sim q}[\rho(X) f(X)] &= \int_{x : q(x)>0} q(x)\frac{p(x)}{q(x)}f(x)\\
    &= \int_{x : q(x)>0}p(x)f(x)\\
    &= \int_{x : q(x)>0}p(x)f(x) + \int_{x : q(x)=0}p(x)f(x) \tag*{(since \text{$q \in \Lambda$})}\\
    &= \int_{x \in \mathcal{X}}p(x)f(x)\\
    &= \mathbb{E}_{X \sim p}[f(X)]\\
\end{align*}
Now we show the $q \in \Lambda_+ \Rightarrow q \in \Lambda$ direction. For any $q \in \Lambda_+$, we have,
\begin{equation*}
    \int_{x : q(x)>0} q(x)\frac{p(x)}{q(x)}f(x) = \int_{x \in \mathcal{X}}p(x)f(x)
\end{equation*}
which implies that,
\begin{equation*}
    \int_{x : q(x)=0} p(x)f(x) = 0
\end{equation*}
Since $p(x) \geq 0$ (probability denisty) and $f(x)$ has the same sign (by assumption in the lemma statement), then we must have,
\begin{equation*}
    p(x)f(x) = 0 \quad \forall x \in \{x \mid q(x) =0 \}
\end{equation*}
This gives exactly $q(x) = 0 \Rightarrow p(x)f(x) =0$ and thus $q \in \Lambda$. In remains now to show zero variance. First using the same notation from the proof of Lemma \ref{lem:optimal}, we define,
\begin{equation*}
    \mathcal{X}_{+} = \{ x \mid p(x)f(x) =0 \} 
\end{equation*}
Then when $\forall x \in \mathcal{X}, f(x) \geq 0$ and when $\vert \mathcal{X}_+ \vert > 0$ then we have,
\begin{equation*}
    q^*(x) = \frac{p(x) \vert f(x) \vert}{c} \quad \forall x \in \mathcal{X}
\end{equation*}
where $c = \int_{x' \in \mathcal{X}} = p(x')\vert f(x') \vert > 0 $ is the normalizing constant. Plugging $q^*$ into the IS estimator, $I_{\text{IS}}$ gives us,
\begin{equation*}
  I_{\text{IS}} = \rho(x)f(x) = \frac{p(x)}{q^*(x)} f(x) = \frac{p(x)}{(p(x) \vert f(x) \vert)/c}f(x) = c
\end{equation*}
This means when $\forall x \in \mathcal{X}, f(x) \geq 0$, with optimal sampling distribution $q^*$, the IS estimator $I_{\text{IS}}$ is a constant function, thus,
\begin{equation*}
    \mathbb{V}_{q^*}(I_{\text{IS}}) = 0
\end{equation*}
Finally, in the case when $\vert \mathcal{X}_+ \vert = 0$, as argued in Lemma \ref{lem:optimal}, the variance is always zero, this can be easily seen by,
\begin{equation*}
     I_{\text{IS}} = \rho(x)f(x) = \frac{p(x)}{q(x)} f(x) = 0 \tag*{(for any \text{$q \in \Lambda$})}
\end{equation*}
For $\forall x \in \mathcal{X}, f(x) \leq 0$ this proof is similar and thus omitted here. 
\end{proof}

\subsection{Preface}

We preface this section of the appendix, by providing the following summary of definitions, starting from the TD($\lambda$)-style recursion presented in Remark \ref{rem:recursion}, we have,
\begin{equation}
    G_t^{\text{TIS}, \lambda} = v_{\pi}(S_t)  + \delta_t + \gamma\lambda c_t\left( G^{\text{TIS, $\lambda$}}_{t+1} - v_{\pi}(S_{t+1})\right) \label{eq:gtisrecursive}
\end{equation}
where,
\begin{equation*}
    \delta_t = \rho_t(r(S_t, A_t) + \gamma v_\pi(S_{t+1}) - v_\pi(S_{t}))
\end{equation*}
and the truncated IS ratios,
\begin{equation*}
    c_t := \min\left(\bar c, \frac{\pi(A_t \vert S_t)}{\mu(A_t \vert S_t)}\right) \qquad \rho_t := \min\left(\bar \rho, \frac{\pi(A_t \vert S_t)}{\mu(A_t \vert S_t)}\right)
\end{equation*}
For the target policy $\pi$, define,
\begin{equation*}
    J_\pi(s) := \mathbb{V}_{\pi}( G_t^{\text{TIS}, \lambda}\mid S_t =s)
\end{equation*}
Similarly, for the behaviour policy $\mu$, define,
\begin{equation*}
    J_\mu(s) := \mathbb{V}_{\mu}( G_t^{\text{TIS}, \lambda}\mid S_t =s)
\end{equation*}
For later convenience we define the variance Bellman operator,
\begin{align*}
    (\mathcal{J}_\mu f)(s)
    &= \mathbb{E}_{A_t \sim \mu}[\gamma^2c^2_t \mathbb{E}_{S_{t+1}\sim p}[f(S_{t+1}) \mid S_t, A_t] \mid S_t=s]\\
    &+ \mathbb{E}_{A_t \sim \mu}[\rho^2_t\nu_\pi(S_t, A_t) \mid S_t=s] + \mathbb{E}_{A_t \sim \mu}[\rho^2_tq^2_\pi(S_t, A_t) \mid S_t=s] - v^2_\pi(s)
\end{align*}
more concisely, let,
\begin{equation*}
    \mathcal{J}_\mu = \mathcal{K}_\mu\mathcal{J}_\mu + b_\mu
\end{equation*}
where,
\begin{align*}
    (\mathcal{K}_\mu f)(s) &= \mathbb{E}_{A_t \sim \mu}[\gamma^2 c_t^2 \mathbb{E}_{S_{t+1}\sim p}[f(S_{t+1}) \mid S_t, A_t] \mid S_t=s]\\
     b_\mu(s) &= \mathbb{E}_{A_t \sim \mu}[\rho_t^2 \nu_\pi(S_t, A_t) \mid S_t=s] + \mathbb{E}_{A_t \sim \mu}[\rho^2_tq^2_\pi(S_t, A_t) \mid S_t=s] - v^2_\pi(s)\\
\end{align*}
For convenience we also provide the definition of the next step variance,
\begin{equation}
    \nu_\pi(s, a) = \mathbb{V}_{S_{t+1}\sim p }(\gamma v_\pi(S_{t+1}) \mid S_t = s, A_t = a) \label{eq:defnu}
\end{equation}
\noindent Finally we define,
\begin{align}
    \tilde r_{\pi}(s, a) & := \nu_{\pi}(s, a) + q_{\pi}^2(s, a) - v^2_{\pi}(s) \label{eq:rtildedef}\\
    \tilde q_{\pi}(s, a) &= \tilde r_{\pi}(s, a) + \gamma^2 \mathbb{E}_{a' \sim \pi, s' \sim p}[\tilde q_{\pi}(s', a')] \label{eq:qtildedef}
\end{align}
\subsection{Auxiliary Lemmas}

We continue with some auxiliary lemmas.

\begin{lemma}(Variance Bellman operator) \label{lem:variancebellman} For any $\mu \in \Lambda$ (unbiased policies), $\bar c, \bar \rho = \infty$ and $\lambda=1$,
\begin{align*}
    & \mathbb{V}_{\mu}( G_t^{\text{TIS}, \lambda}\mid S_t =s)\\
    &= \mathbb{E}_{A_t \sim \mu}[\gamma^2c^2_t \mathbb{E}_{S_{t+1}\sim p}[\mathbb{V}_{\mu}(G_{t+1}^{\text{TIS}, \lambda} \mid S_{t+1}) \mid S_t, A_t] \mid S_t=s]\\
    &+ \mathbb{E}_{A_t \sim \mu}[\rho^2_t\nu_\pi(S_t, A_t) \mid S_t=s] + \mathbb{E}_{A_t \sim \mu}[\rho^2_tq^2_\pi(S_t, A_t) \mid S_t=s] - v^2_\pi(s)\\
    &= (\mathcal{J}_\mu \mathbb{V}_{\mu}( G_t^{\text{TIS}, \lambda}))(s)
\end{align*}    
\end{lemma}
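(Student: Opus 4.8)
The plan is to establish the claimed identity as a one-step variance recursion, obtained by substituting the backward recursion (\ref{eq:gtisrecursive}) for $G_t^{\text{TIS}, \lambda}$ into the conditional variance and then applying the law of total variance. Throughout I would fix $S_t = s$ and use the hypotheses $\lambda = 1$ and $\bar c = \bar\rho = \infty$, which collapse the truncated ratios to a single importance weight $w_t := \pi(A_t \mid S_t)/\mu(A_t \mid S_t) = c_t = \rho_t$, together with $\mu \in \Lambda$, which by Theorem~\ref{thm:unbiasedness} supplies the crucial unbiasedness identity $\mathbb{E}_{\mu}[G_{t+1}^{\text{TIS}, \lambda} \mid S_{t+1} = s'] = v_{\pi}(s')$.

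First I would simplify the recursion. Expanding $\delta_t$ and cancelling the two $\gamma w_t v_{\pi}(S_{t+1})$ terms reduces the $\lambda = 1$ recursion to the one-step form $G_t^{\text{TIS}, \lambda} = v_{\pi}(s) + w_t\big(R_{t+1} - v_{\pi}(s) + \gamma G_{t+1}^{\text{TIS}, \lambda}\big)$, i.e.\ $G_t^{\text{TIS}, \lambda} = v_{\pi}(s) + w_t Y$ with $Y := R_{t+1} - v_{\pi}(s) + \gamma G_{t+1}^{\text{TIS}, \lambda}$. Since $v_{\pi}(s)$ is deterministic given $S_t = s$, I can then decompose by conditioning first on $A_t$ and then on $S_{t+1}$: given $(S_t, A_t)$ the weight $w_t$ is fixed, so $\mathbb{E}_{\mu}[Y \mid s, A_t] = q_{\pi}(s, A_t) - v_{\pi}(s)$ by the Bellman equation for $q_{\pi}$ combined with unbiasedness of $G_{t+1}^{\text{TIS}, \lambda}$, while a nested application of the law of total variance over $S_{t+1} \sim p$ gives $\mathbb{V}_{\mu}(Y \mid s, A_t) = \nu_{\pi}(s, A_t) + \gamma^2 \mathbb{E}_{S_{t+1} \sim p}[\mathbb{V}_{\mu}(G_{t+1}^{\text{TIS}, \lambda} \mid S_{t+1}) \mid s, A_t]$, using definition (\ref{eq:defnu}) of $\nu_{\pi}$ to absorb the value-function term. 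Substituting $\mathbb{E}_{\mu}[Y^2 \mid s, A_t] = \mathbb{V}_{\mu}(Y \mid s, A_t) + (q_{\pi}(s, A_t) - v_{\pi}(s))^2$ into $\mathbb{E}_{A_t \sim \mu}[w_t^2\,\mathbb{E}_{\mu}[Y^2 \mid s, A_t]]$ should yield the three $\rho_t^2$-weighted terms of the operator, with the residual $-v_{\pi}^2(s)$ coming from the baseline bookkeeping; the final equality $(\mathcal{J}_{\mu} J_{\mu})(s)$ is then immediate from the preface definition of $\mathcal{J}_{\mu} = \mathcal{K}_{\mu} J_{\mu} + b_{\mu}$ once I read off $f = J_{\mu}$.

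The main obstacle I expect is the careful accounting of the value-baseline (control-variate) terms. The cross term between $v_{\pi}(s)$ and $w_t Y$ must be killed using $\mathbb{E}_{A_t \sim \mu}[w_t(q_{\pi}(s, A_t) - v_{\pi}(s))] = \mathbb{E}_{A_t \sim \pi}[q_{\pi}(s, A_t)] - v_{\pi}(s) = 0$, and after expanding $(q_{\pi} - v_{\pi})^2$ one has to verify that the remaining baseline contributions assemble \emph{precisely} into the stated $\mathbb{E}_{A_t \sim \mu}[\rho_t^2 q_{\pi}^2(S_t, A_t) \mid s] - v_{\pi}^2(s)$, rather than leaving stray $v_{\pi}$-weighted cross-terms of the form $v_{\pi}(s)\,\mathbb{E}_{\mu}[\rho_t^2 q_{\pi}]$. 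Confirming that the unbiasedness hypothesis is exactly strong enough to discard every conditional-mean remainder, so that the collected terms match the operator $\mathcal{J}_{\mu}$, is the delicate step; the remaining manipulations (the two total-variance splits and the appeal to linearity) are routine.
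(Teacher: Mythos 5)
Your route is the same as the paper's proof -- two nested applications of the law of total variance plus the unbiasedness identity -- and everything you actually carry out is correct and matches the paper step for step: the reduction of the $\lambda=1$ recursion to $G_t^{\text{TIS},\lambda} = v_\pi(s) + w_t Y$ with $Y = R_{t+1} - v_\pi(s) + \gamma G_{t+1}^{\text{TIS},\lambda}$, the conditional mean $\mathbb{E}_\mu[Y \mid s, A_t] = q_\pi(s,A_t) - v_\pi(s)$, and the conditional variance $\mathbb{V}_\mu(Y \mid s, A_t) = \nu_\pi(s,A_t) + \gamma^2\mathbb{E}_{S_{t+1}\sim p}[\mathbb{V}_\mu(G_{t+1}^{\text{TIS},\lambda}\mid S_{t+1})]$. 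The genuine gap is precisely the step you deferred: verifying that the baseline contributions ``assemble precisely'' into $\mathbb{E}_{A_t\sim\mu}[\rho_t^2 q_\pi^2(S_t,A_t)\mid S_t=s] - v_\pi^2(s)$. Carried out, your decomposition yields the between-actions contribution $\mathbb{E}_{A_t\sim\mu}[\rho_t^2\,(q_\pi(s,A_t)-v_\pi(s))^2]$, and expanding the square leaves exactly the stray cross-terms you worried about: it equals the stated quantity if and only if $2v_\pi(s)\,\mathbb{E}_\mu[\rho_t^2 q_\pi(s,A_t)] - v_\pi^2(s)\,\mathbb{E}_\mu[\rho_t^2] = v_\pi^2(s)$, which holds when $\mu=\pi$ (all ratios equal $1$) but fails for general $\mu\in\Lambda$. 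A one-step check makes this concrete: a single state with two actions, deterministic rewards $0$ and $2$ followed by termination, $\pi$ uniform (so $q_\pi \in \{0,2\}$, $v_\pi(s)=1$), and $\mu = (\tfrac14,\tfrac34)$; then $G_t^{\text{TIS},\lambda} = v_\pi(s) + \rho_t(R_{t+1}-v_\pi(s))$ takes values $-1$ and $\tfrac53$, with true variance $\tfrac43 = \mathbb{E}_\mu[\rho_t^2(q_\pi-v_\pi)^2]$, whereas the lemma's right-hand side evaluates to $\mathbb{E}_\mu[\rho_t^2 q_\pi^2] - v_\pi^2 = \tfrac13$ -- the latter being the variance of the baseline-free PDIS return $\rho_t R_{t+1}$, not of $G_t^{\text{TIS},\lambda}$.

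So your instinct about the delicate step was exactly right, and it is not closable at the claimed generality: the displayed identity in Lemma \ref{lem:variancebellman} is the classical PDIS variance decomposition, while $G_t^{\text{TIS},\lambda}$ carries the control variate $v_\pi(S_t)(1-\rho_t)$, which changes $b_\mu$ off-policy. The paper's own proof reaches the stated form only by replacing the conditional mean $v_\pi(s) + \rho_t(q_\pi(s,A_t)-v_\pi(s))$ with $\rho_t q_\pi(s,A_t)$ by invoking $\mathbb{E}_\mu[\rho_t]=1$; but conditional on $A_t$ the ratio $\rho_t$ is deterministic, so that identity cannot be applied pointwise inside $\mathbb{V}_{A_t\sim\mu}(\cdot)$ -- two random variables with equal means need not have equal variances. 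Your more careful bookkeeping in fact proves the corrected statement with
\begin{equation*}
    b_\mu(s) = \mathbb{E}_{A_t\sim\mu}\big[\rho_t^2\,\nu_\pi(S_t,A_t) \mid S_t=s\big] + \mathbb{E}_{A_t\sim\mu}\big[\rho_t^2\,(q_\pi(S_t,A_t)-v_\pi(S_t))^2 \mid S_t=s\big],
\end{equation*}
which coincides with the paper's operator when $\mu=\pi$ (so Lemma \ref{lem:variancequality} is unaffected) but differs for the behaviour policies of interest, and hence touches the fixed-point argument in Theorem \ref{thm:variancereduction}. As a proof of the lemma exactly as stated, your proposal therefore has a gap that cannot be filled; completed honestly, it establishes a different (and correct) recursion rather than the one claimed.
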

\begin{proof}
    The proof is successive applications of the law of total variance, first conditioning on $A_t \sim \mu$,
    \begin{equation*}
        \mathbb{V}_{\mu}(G_t^{\text{TIS}, \lambda}\mid S_t = s) = \underbrace{\mathbb{E}_{A_t\sim \mu}[\mathbb{V}_\mu(G_t^{\text{TIS}, \lambda} \mid S_t, A_t)\mid S_t] }_\text{conditional variance} + \underbrace{\mathbb{V}_{A_t\sim\mu}(\mathbb{E}_{\mu}[G_t^{\text{TIS}, \lambda}\mid S_t, A_t] \mid S_t)}_\text{variance between actions}
    \end{equation*}
Dealing with the \emph{conditional variance} term, 
\begin{align*}
    &\mathbb{V}_\mu(G_t^{\text{TIS}, \lambda} \mid S_t, A_t)\\
    &= \mathbb{V}_\mu( v_\pi(S_t) + \delta_t + \gamma\lambda c_t( G^{\text{TIS, $\lambda$}}_{t+1} - v_{\pi}(S_{t+1})) \mid S_t, A_t)\tag*{(by recursive definition (\ref{eq:gtisrecursive}))}\\
    &= \mathbb{V}_\mu(\gamma\rho_tv_\pi(S_{t+1}) + \gamma\lambda c_t( G^{\text{TIS, $\lambda$}}_{t+1} - v_{\pi}(S_{t+1})) \mid S_t, A_t) \tag*{(dropping deterministic terms)}\\
    &= \mathbb{V}_\mu(\gamma((\rho_t - \lambda c_t) v_\pi(S_{t+1}) + \lambda c_t( G^{\text{TIS, $\lambda$}}_{t+1})) \mid S_t, A_t) \tag*{(collecting terms)}
\end{align*}
Now applying the law of total variance over the next step transitions $S_{t+1} \sim p$,
\begin{align*}
    &\mathbb{V}_\mu(\gamma((\rho_t - \lambda c_t) v_\pi(S_{t+1}) + \lambda c_t( G^{\text{TIS, $\lambda$}}_{t+1})) \mid S_t, A_t)\\
    \begin{split}
    &= \mathbb{E}_{S_{t+1} \sim p}[\mathbb{V}_\mu(\gamma((\rho_t - \lambda c_t) v_\pi(S_{t+1}) + \lambda c_t( G^{\text{TIS, $\lambda$}}_{t+1})) \mid S_{t+1}) \mid S_t, A_t ] \\
    &\quad + \mathbb{V}_{S_{t+1}\sim \mu}(\mathbb{E}_\mu[\gamma((\rho_t - \lambda c_t) v_\pi(S_{t+1}) + \lambda c_t( G^{\text{TIS, $\lambda$}}_{t+1})) \mid S_{t+1}] \mid S_t, A_t)
    \end{split}\\
    & = \mathbb{E}_{S_{t+1} \sim p}[\gamma^2 c_t^2\mathbb{V}_\mu( G^{\text{TIS, $\lambda$}}_{t+1} \mid S_{t+1}) \mid S_t, A_t ] + \mathbb{V}_{S_{t+1}\sim \mu}(\gamma((\rho_t - \lambda c_t) v_\pi(S_{t+1}) + \mathbb{E}_\mu[\lambda c_t( G^{\text{TIS, $\lambda$}}_{t+1})) \mid S_{t+1}] \mid S_t, A_t) \tag*{(\text{$\lambda=1$})}\\
    & = \mathbb{E}_{S_{t+1} \sim p}[\gamma^2 c_t^2\mathbb{V}_\mu( G^{\text{TIS, $\lambda$}}_{t+1} \mid S_{t+1}) \mid S_t, A_t ] + \mathbb{V}_{S_{t+1}\sim \mu}(\gamma((\rho_t - \lambda c_t) v_\pi(S_{t+1}) + \lambda c_t v_\pi(S_{t+1})) \mid S_t, A_t) \tag*{(unbiasedness)}\\
    & = \mathbb{E}_{S_{t+1} \sim p}[\gamma^2 c_t^2\mathbb{V}_\mu( G^{\text{TIS, $\lambda$}}_{t+1} \mid S_{t+1}) \mid S_t, A_t ] + \mathbb{V}_{S_{t+1}\sim \mu}(\gamma\rho_t v_\pi(S_{t+1}) \mid S_t, A_t)\\
    & = \mathbb{E}_{S_{t+1} \sim p}[\gamma^2 c_t^2\mathbb{V}_\mu( G^{\text{TIS, $\lambda$}}_{t+1} \mid S_{t+1}) \mid S_t, A_t ] + \rho^2_t\nu_\pi(s, a)\\
\end{align*}
Therefore, we have,
\begin{equation*}
    \mathbb{E}_{A_t\sim \mu}[\mathbb{V}_\mu(G_t^{\text{TIS}, \lambda}\mid S_t, A_t)\mid S_t] = \mathbb{E}_{A_t\sim \mu}[\gamma^2 c_t^2\mathbb{E}_{S_{t+1} \sim p}[\mathbb{V}_\mu( G^{\text{TIS, $\lambda$}}_{t+1} \mid S_{t+1}) \mid S_t, A_t ] \mid S_t] + \mathbb{E}_{A_t\sim \mu}[\rho_t^2\nu_\pi(S_t, A_t) \mid S_t]
\end{equation*}
Now dealing with the \emph{variance between actions} terms, firstly,
\begin{align*}
    &\mathbb{E}_{\mu}[G_t^{\text{TIS}, \lambda}\mid S_t, A_t] \\
    & = \mathbb{E}_{\mu}[v_\pi(S_t) +\delta_t + \gamma\lambda c_t( G^{\text{TIS, $\lambda$}}_{t+1} - v_{\pi}(S_{t+1}))\mid S_t, A_t]\\
    & = \mathbb{E}_{\mu}[v_\pi(S_t) +\delta_t\mid S_t, A_t] \tag*{(unbiasedness)}\\
    & = v_\pi(s) + \mathbb{E}_{\mu}[\rho_t (r(S_t, A_t) + \gamma v_\pi(S_{t+1}) - v_\pi(S_t))\mid S_t, A_t]\\
    & = v_\pi(s) - v_\pi(s)\mathbb{E}[\rho_t] - \mathbb{E}_{\mu}[\rho_t (r(S_t, A_t) + \gamma v_\pi(S_{t+1}))\mid S_t, A_t]\\
    & = \mathbb{E}_{\mu}[\rho_t (r(S_t, A_t) + \gamma v_\pi(S_{t+1}))\mid S_t, A_t]\ \tag*{(since \text{$\mathbb{E}_{\mu}[\rho_t]=1$})}\\
    & = \mathbb{E}_{\mu}[\rho_t q_\pi(S_t, A_t)\mid S_t, A_t]\\
\end{align*}
Therefore, we have,
\begin{align*}
    &\mathbb{V}_{A_t \sim \mu}(\mathbb{E}_{\mu}[G_t^{\text{TIS}, \lambda}\mid S_t, A_t] \mid S_t)\\
    &= \mathbb{V}_{A_t \sim \mu}(\mathbb{E}_{\mu}[\rho_t q_\pi(S_t, A_t))\mid S_t, A_t] \mid S_t )\\
    & = \mathbb{E}_{A_t \sim \mu}[\rho_t^2 q^2_\pi(S_t, A_t) \mid S_t] - (\mathbb{E}_{A_t \sim \mu}[\rho_t q_\pi(S_t, A_t)\mid S_t])^2\\
    & = \mathbb{E}_{A_t \sim \mu}[\rho_t^2 q^2_\pi(S_t, A_t)\mid S_t] - v_\pi^2(s)
\end{align*}
Putting both terms together now gives the desired result. 
\end{proof}

\begin{lemma}[Variance equality for $\pi$] \label{lem:variancequality} For $\bar c, \bar\rho \geq 1$
\begin{equation*}
    \mathbb{V}_{\pi}(G^{\text{TIS}, \lambda}_t \mid S_t = s) = \mathbb{E}_{A_t \sim \pi}[\tilde q (S_t, A_t)]
\end{equation*} 
\end{lemma}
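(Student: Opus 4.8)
The plan is to reduce the claim to a uniqueness-of-fixed-point argument for a $\gamma^2$-discounted Bellman operator. First I would specialise the setting to $\mu = \pi$ (which is what $\mathbb{V}_\pi$ encodes) and observe that the per-step ratios collapse: since $\pi(A_t\vert S_t)/\pi(A_t\vert S_t) = 1$ and $\bar c,\bar\rho \geq 1$, neither truncation binds, so $c_t = \rho_t = 1$ along every trajectory. Although the Variance Bellman operator of Lemma \ref{lem:variancebellman} is stated for $\bar c,\bar\rho = \infty$, its derivation only ever manipulates the truncated ratios $c_t,\rho_t$; because these equal $1$ here irrespective of the (finite) truncation levels, the recursion transfers verbatim. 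Writing $J_\pi(s) := \mathbb{V}_\pi(G_t^{\text{TIS},\lambda}\mid S_t=s)$ and substituting $c_t=\rho_t=1$ into Lemma \ref{lem:variancebellman} (taking $\lambda=1$, matching that lemma and the $\gamma^2$-structure of $\tilde q_\pi$) yields
\begin{equation*}
    J_\pi(s) = \mathbb{E}_{A_t\sim\pi}\big[\nu_\pi(S_t,A_t) + q_\pi^2(S_t,A_t)\big] - v_\pi^2(s) + \gamma^2\,\mathbb{E}_{A_t\sim\pi,\,S_{t+1}\sim p}\big[J_\pi(S_{t+1})\big].
\end{equation*}

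Next I would show that the candidate right-hand side satisfies the identical equation. Set $U(s) := \mathbb{E}_{A_t\sim\pi}[\tilde q_\pi(S_t,A_t)\mid S_t=s]$. Unrolling one step of the defining recursion (\ref{eq:qtildedef}) for $\tilde q_\pi$ and pulling the state-only term $v_\pi^2(s)$ out of the action expectation in $\tilde r_\pi$ (\ref{eq:rtildedef}) gives
\begin{equation*}
    U(s) = \mathbb{E}_{A_t\sim\pi}\big[\nu_\pi(S_t,A_t)+q_\pi^2(S_t,A_t)\big] - v_\pi^2(s) + \gamma^2\,\mathbb{E}_{A_t\sim\pi,\,S_{t+1}\sim p}\big[U(S_{t+1})\big],
\end{equation*}
so $U$ obeys exactly the same recursion as $J_\pi$.

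To finish, both $J_\pi$ and $U$ are fixed points of the affine operator $(\mathcal{T}F)(s) = g(s) + \gamma^2\,\mathbb{E}_{A_t\sim\pi,\,S_{t+1}\sim p}[F(S_{t+1})]$ with $g(s) = \mathbb{E}_{A_t\sim\pi}[\nu_\pi(S_t,A_t)+q_\pi^2(S_t,A_t)] - v_\pi^2(s)$. Since $\gamma\in[0,1)$ we have $\gamma^2<1$, so $\mathcal{T}$ is a $\gamma^2$-contraction in the supremum norm; as the rewards, and hence $g$, $q_\pi$, $v_\pi$ and $\nu_\pi$, are bounded, $\mathcal{T}$ admits a unique bounded fixed point. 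Therefore $J_\pi = U$, which is the claim. The main obstacle is the bookkeeping in the first paragraph: carefully confirming that Lemma \ref{lem:variancebellman}'s recursion survives in the $\bar c,\bar\rho \geq 1$ regime (via the ratio collapse) and that $g$ is precisely the action-averaged $\tilde r_\pi$, so that the two fixed-point equations align term for term; once that matching is verified, the contraction argument is routine.
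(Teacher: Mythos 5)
Your proposal is correct and follows essentially the same route as the paper's own proof: both verify that $J_\pi$ and $s \mapsto \mathbb{E}_{a\sim\pi}[\tilde q_\pi(s,a)]$ satisfy the same affine fixed-point equation obtained from Lemma \ref{lem:variancebellman} with $\mu=\pi$ (where the ratios collapse to $1$), and conclude via uniqueness of the fixed point of the $\gamma^2$-contraction. Your explicit justification that Lemma \ref{lem:variancebellman}'s recursion transfers from the $\bar c,\bar\rho=\infty$ regime to $\bar c,\bar\rho\geq 1$ because the untruncated on-policy ratios equal $1$ is a point the paper passes over silently, but it is the same argument, not a different one.
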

\begin{proof} Let $J_\pi =  \mathbb{V}_{\pi}(G^{\text{TIS}, \lambda}_t \mid S_t = s)$, recalling, the definition for the variance Bellman operator $\mathcal{J}_\mu$,
\begin{align*}
    (\mathcal{J}_\mu f)(s)
    &= \mathbb{E}_{A_t \sim \mu}[\gamma^2c^2_t \mathbb{E}_{S_{t+1}\sim p}[f(S_{t+1}) \mid S_t, A_t] \mid S_t=s]\\
    &+ \mathbb{E}_{A_t \sim \mu}[\rho^2_t\nu_\pi(S_t, A_t) \mid S_t=s] + \mathbb{E}_{A_t \sim \mu}[\rho^2_tq^2_\pi(S_t, A_t) \mid S_t=s] - v^2_\pi(s)
\end{align*}
and,
\begin{align*}
    \tilde r_{\pi}(s, a) & :=  \nu_{\pi}(s, a) + q_{\pi}^2(s, a) - v^2_{\pi}(s, a)\\
    \tilde q_{\pi}(s, a) &= \tilde r_{\pi}(s, a) + \gamma^2 \mathbb{E}_{a' \sim \pi, s' \sim p}[\tilde q_{\pi}(s', a')]
\end{align*}
Since $\mu = \pi$, and $\bar c, \bar\rho \geq 1$ then,
\begin{align*}
    (\mathcal{J}_\pi J_\pi)(s)
    &= \mathbb{E}_{A_t \sim \mu}[\gamma^2 \mathbb{E}_{S_{t+1}\sim p}[J_\pi(S_{t+1}) \mid S_t, A_t] \mid S_t=s]\\
    &+ \mathbb{E}_{A_t \sim \mu}[\nu_\pi(S_t, A_t) \mid S_t=s] + \mathbb{E}_{A_t \sim \mu}[q^2_\pi(S_t, A_t) \mid S_t=s] - v^2_\pi(s)
\end{align*}
Lemma \ref{lem:variancebellman} says that $J_\pi = \mathcal{J}_\pi J_\pi$. Since $\Vert \mathcal{J}_\pi f - \mathcal{J}_\pi f' \Vert_{\infty} \leq \gamma^2 \Vert f - f' \Vert_{\infty}$, as $\mathcal{J}_\pi$ is a $\gamma^2$-contraction on bounded functions, thus the fixed point $J_\pi$ is unique. Now let $\tilde v_\pi(s) := \mathbb{E}_{a\sim \pi}[\tilde q (s, a)]$, then,
\begin{align*}
    &\tilde v_\pi(s)\\
    &= \mathbb{E}_{a \sim \pi}[\tilde q (s, a)]\\
    &= \mathbb{E}_{a \sim \pi}[\tilde r_\pi(s, a)  + \gamma^2 \mathbb{E}_{a'\sim\pi, s'\sim p}[\tilde q (s, a)]] \tag*{(by (\ref{eq:qtildedef}))}\\
    &= \mathbb{E}_{a \sim \pi}[\nu_\pi(s, a) + q^2(s, a) - v^2(s, a)  + \gamma^2 \mathbb{E}_{a'\sim\pi, s'\sim p}[\tilde q (s, a)]] \tag*{(by (\ref{eq:rtildedef}))}\\
    & = \mathbb{E}_{a \sim \pi}[\gamma^2\mathbb{E}_{a'\sim\pi, s'\sim p}[\tilde q (s, a)]] + \mathbb{E}_{a \sim \pi}[\nu_\pi(s, a)] + \mathbb{E}_{a \sim \pi}[q^2_\pi(s, a)] - \mathbb{E}_{a \sim \pi}[v_\pi^2(s)]\\
    & = \mathbb{E}_{a \sim \pi}[\gamma^2\mathbb{E}_{s' \sim p}[\tilde v_\pi(s')]] + \mathbb{E}_{a \sim \pi}[\nu_\pi(s, a)] + \mathbb{E}_{a \sim \pi}[q^2_\pi(s, a)] - v_\pi^2(s)\\
\end{align*}
which establishes that $\tilde v \mathcal{J}_\pi = \tilde v$. And thus by uniqueness $\tilde v = J_\pi$.
\end{proof}
\begin{lemma} \label{lem:qhattildedef}
    \begin{equation*}
    \begin{split}
        \hat q_{\pi}(s, a) &= \nu(s, a) + q^2_{\pi}(s,a ) +  \gamma^2\mathbb{E}_{s' \sim p}[\mathbb{E}_{a' \sim \pi}[\tilde q(s', a')]]\\
    & = \tilde q_{\pi}(s, a) + v^2_\pi(s)
    \end{split} 
\end{equation*} 
\end{lemma}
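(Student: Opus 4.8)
The plan is to establish the two stated equalities in sequence, leveraging the definition of $\hat q_{\pi}$ from the main body together with Lemma~\ref{lem:variancequality}. Recall that the main body defines $\hat q_{\pi}(s, a) = q^2_{\pi}(s, a) + \nu_{\pi}(s, a) + \gamma^2 \mathbb{E}_{s' \sim p}[\mathbb{V}_{\pi}(G_{t+1}^{\text{PDIS}} \mid S_{t+1} = s')]$. The first equality will follow once I rewrite the inner variance term $\mathbb{V}_{\pi}(G_{t+1}^{\text{PDIS}} \mid S_{t+1} = s')$ as $\mathbb{E}_{a' \sim \pi}[\tilde q_{\pi}(s', a')]$, after which the second equality is essentially bookkeeping.

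To perform that rewrite, I would first observe that the variance appearing in the definition of $\hat q_{\pi}$ is taken \emph{on-policy}, i.e.\ under $\mu = \pi$, so every importance ratio satisfies $\rho_i = c_i = 1$ (no truncation occurs for $\bar c, \bar\rho \geq 1$). Under on-policy sampling with $\lambda = 1$, a standard telescoping of the value-function terms in the definition (\ref{eq:tislamda}) collapses $G^{\text{TIS}, \lambda}_{t+1}$ to the ordinary return $G_{t+1}$; likewise $G^{\text{PDIS}}_{t+1}$ reduces to $G_{t+1}$ since all sequence-level ratios are unity. Consequently $\mathbb{V}_{\pi}(G^{\text{PDIS}}_{t+1} \mid S_{t+1} = s') = \mathbb{V}_{\pi}(G^{\text{TIS}, 1}_{t+1} \mid S_{t+1} = s')$, and Lemma~\ref{lem:variancequality}, applied at state $s'$ with $\lambda = 1$, identifies the latter with $\mathbb{E}_{a' \sim \pi}[\tilde q_{\pi}(s', a')]$. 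Substituting this identity back into the definition of $\hat q_{\pi}$ gives the first equality $\hat q_{\pi}(s, a) = \nu_{\pi}(s, a) + q^2_{\pi}(s, a) + \gamma^2 \mathbb{E}_{s' \sim p}[\mathbb{E}_{a' \sim \pi}[\tilde q_{\pi}(s', a')]]$.

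For the second equality I would unfold the recursion $\tilde q_{\pi}(s, a) = \tilde r_{\pi}(s, a) + \gamma^2 \mathbb{E}_{a' \sim \pi, s' \sim p}[\tilde q_{\pi}(s', a')]$ using $\tilde r_{\pi}(s, a) = \nu_{\pi}(s, a) + q^2_{\pi}(s, a) - v^2_{\pi}(s)$ from (\ref{eq:rtildedef}) and (\ref{eq:qtildedef}), then add $v^2_{\pi}(s)$ to both sides so that the $-v^2_{\pi}(s)$ term in $\tilde r_{\pi}$ cancels. The right-hand side becomes $\nu_{\pi}(s, a) + q^2_{\pi}(s, a) + \gamma^2 \mathbb{E}_{s' \sim p}[\mathbb{E}_{a' \sim \pi}[\tilde q_{\pi}(s', a')]]$, which is exactly the expression already shown to equal $\hat q_{\pi}(s, a)$ in the first equality. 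Matching the two therefore forces $\hat q_{\pi}(s, a) = \tilde q_{\pi}(s, a) + v^2_{\pi}(s)$, completing the proof.

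The main obstacle is the middle step: rigorously justifying that the PDIS-based variance term built into the definition of $\hat q_{\pi}$ coincides with the TIS-based variance characterized by Lemma~\ref{lem:variancequality}. This identification hinges on the telescoping identity holding \emph{exactly} under on-policy sampling with $\lambda = 1$ and $\bar c, \bar\rho = \infty$, so I would be most careful to confirm that these parameter choices are used consistently with the regime in which $\hat q_{\pi}$ and Lemma~\ref{lem:variancequality} are defined; the remaining algebra is purely routine.
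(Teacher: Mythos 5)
Your proof is correct, and it supplies exactly the details the paper omits: the paper's entire proof of Lemma~\ref{lem:qhattildedef} is the single sentence that it is ``a straightforward manipulation of (\ref{eq:qhatdef})'', which as written is circular, since (\ref{eq:qhatdef}) is the statement of Theorem~\ref{thm:qhat}, whose appendix proof in turn opens by invoking Lemma~\ref{lem:qhattildedef}. Your route is the correct non-circular one and clearly the intended derivation: start from the actual definition of $\hat q_{\pi}$ (the PDIS-variance expression in the Background section), note that the variance there is taken on-policy so every ratio equals one, whence $G^{\text{PDIS}}_{t+1}$ coincides pathwise with the plain return $G_{t+1}$, and (for $\lambda=1$, $\gamma<1$, bounded $v_{\pi}$ so the tail term $\gamma^{K}v_{\pi}(S_{t+K})$ vanishes in the telescoping) so does $G^{\text{TIS},1}_{t+1}$; Lemma~\ref{lem:variancequality} applied at state $s'$ then replaces $\mathbb{V}_{\pi}(G^{\text{PDIS}}_{t+1}\mid S_{t+1}=s')$ by $\mathbb{E}_{a'\sim\pi}[\tilde q_{\pi}(s',a')]$, giving the first equality, and the second is the cancellation of the $-v^2_{\pi}(s)$ inside $\tilde r_{\pi}$ from (\ref{eq:rtildedef}) against the added $v^2_{\pi}(s)$ after unfolding one step of (\ref{eq:qtildedef}), exactly as you describe. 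Your closing caution about parameter regimes is also warranted rather than pedantic: Lemma~\ref{lem:variancequality} is stated only with $\bar c,\bar\rho\geq 1$, but its proof rests on Lemma~\ref{lem:variancebellman}, which assumes $\lambda=1$, so your explicit restriction to $\lambda=1$ is genuinely needed and is consistent with the regime in which the lemma is consumed by Theorem~\ref{thm:variancereduction}.
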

\begin{proof}
    The proof is a straightforward manipulation of (\ref{eq:qhatdef}).
\end{proof}

\noindent We continue now with the main proofs from the paper.
 
\subsection{Analysis for Lemma \ref{lem:vtrace}}
\label{sec:lemvtrace}

\noindent 
\noindent Let $\bar c, \bar \rho \geq 1$ and $\mu = \pi$, then from (\ref{eq:tislamda}), 
\begin{align*}
    &\mathbb{E}_{\mu}[G_t^{\text{TIS}, \lambda}] \\
    &=\mathbb{E}_{\pi}[G_t^{\text{TIS}, \lambda}] \tag*{(on policy)} \\
    & = \mathbb{E}_{\pi}\left[v_{\pi}(S_t) + \sum^{\infty}_{k=t} (\gamma\lambda)^{k-t} \left(\prod_{i=t}^{k-1} c_i \right) \delta_k \right]\\
    & = \mathbb{E}_{\pi}\left[v_{\pi}(S_t) + \sum^{\infty}_{k=t} (\gamma\lambda)^{k-t} (r(S_t, A_t) + \gamma v_\pi(S_{t+1}) - v_\pi(S_t))\right] \tag*{(ratios are 1)}\\
    & = \mathbb{E}_{\pi}[v_{\pi}(S_t)] + \sum^{\infty}_{k=t} (\gamma\lambda)^{k-t} \mathbb{E}_\pi[r(S_t, A_t) + \gamma v_\pi(S_{t+1}) - v_\pi(S_t)]  \\
    &= \mathbb{E}_{\pi}[v_{\pi}(S_t)] \tag*{(since \text{$v_\pi$} satisfies the Bellman equation)}\\
    & = \mathbb{E}_\pi(\mathbb{E}_\pi(G_t \mid S_t)) = \mathbb{E}_\pi[G_t] \tag*{(tower property)}
\end{align*}

\subsection{Proof of Theorem \ref{thm:unbiasedness}}
\label{sec:thmunbiasedness}
\begin{restatedthm}{\ref{thm:unbiasedness} (restated)}
    Let $\bar c, \bar \rho = \infty$; then for all $\mu \in \Lambda$, $s \in \mathcal{S}$, $\lambda \in [0,1]$, we have,
    \begin{equation*}
        \mathbb{E}_{\mu}[ G_t^{\text{TIS}, \lambda} \mid S_t = s ] = v_{\pi}(s)
    \end{equation*}
\end{restatedthm}
\begin{proof}

Let $\bar c, \bar \rho = \infty$ then from (\ref{eq:tislamda}),

\begin{align*}
    &\mathbb{E}_{\mu}[G_t^{\text{TIS}, \lambda} \mid S_t=s] \\
    & = \mathbb{E}_{\mu}\left[v_{\pi}(S_t) + \sum^{\infty}_{k=t} (\gamma\lambda)^{k-t} \left(\prod_{i=t}^{k-1} c_i \right) \delta_k \mid S_t =s\right]\\
    & = \mathbb{E}_{\mu}\left[v_{\pi}(S_t) + \sum^{\infty}_{k=t} (\gamma\lambda)^{k-t}  \rho_{t:k-1} \delta_k \mid S_t=s \right]\\
    & = \mathbb{E}_{\mu}\left[\rho_t(R_{t+1} + \gamma v_\pi(S_{t+1})) + v_{\pi}(S_t) - \rho_t v_\pi(S_t) + \sum^{\infty}_{k=t+1} (\gamma\lambda)^{k-t}  \rho_{t:k-1} \delta_k \mid S_t=s\right]\\
    & = \mathbb{E}_{\mu}\left[\rho_tq_\pi(S_t, A_t)) + v_{\pi}(S_t) - \rho_t v_\pi(S_t) + \sum^{\infty}_{k=t+1} (\gamma\lambda)^{k-t}  \rho_{t:k-1} \delta_k \mid S_t=s\right] \\
    & = \mathbb{E}_{\mu}[\rho_tq_\pi(S_t, A_t)) \mid S_t=s] + v_\pi(s) - \mathbb{E}_\mu[\rho_t v_\pi(S_t) \mid S_t=s] + \mathbb{E}_\mu\left[\sum^{\infty}_{k=t+1} (\gamma\lambda)^{k-t}  \rho_{t:k-1} \delta_k \mid S_t=s\right] \\
    & = \mathbb{E}_{\mu}[\rho_tq_\pi(S_t, A_t)) \mid S_t=s] + v_\pi(s) -  v_\pi(S_t)\mathbb{E}_\mu[\rho_t] + \mathbb{E}_\mu\left[\sum^{\infty}_{k=t+1} (\gamma\lambda)^{k-t}  \rho_{t:k-1} \delta_k \mid S_t=s\right] \\
    & = \mathbb{E}_{\mu}[\rho_tq_\pi(S_t, A_t)) \mid S_t=s] + \mathbb{E}_\mu\left[\sum^{\infty}_{k=t+1} (\gamma\lambda)^{k-t}  \rho_{t:k-1} \delta_k \mid S_t=s\right] \tag*{(since \text{$\mathbb{E}_\mu[\rho_t] =1$})}\\
    & = \mathbb{E}_{\mu}[\rho_tq_\pi(S_t, A_t)) \mid S_t=s] + \mathbb{E}_\mu\left[\sum^{\infty}_{k=t+1} (\gamma\lambda)^{k-t}  \rho_{t:k} (R_{k+1} + \gamma v_\pi(S_{k+1}) - v_\pi(S_k)) \mid S_t=s\right]\\
    & = \mathbb{E}_{\mu}[\rho_tq_\pi(S_t, A_t)) \mid S_t=s] \tag*{(since \text{$v_\pi$} satisfies the Bellman equation)}\\
    &= \int_a \mu(a \mid s) \frac{\pi(a \mid s)}{\mu(a \mid s)}q_\pi(s, a)\\
     &=\mathbb{E}_\pi[q(S_t, A_t) \mid S_t =s] \tag*{(recalling \text{$\mu(a \mid s) = 0 \Rightarrow \pi(a \mid s)q(s, a) = 0$} from Lemma \ref{lem:optimal})}\\ 
     &= v_\pi(s)
\end{align*}
\end{proof}
\subsection{Proof of Theorem \ref{thm:variancereduction}}
\label{sec:thmvariancereduction}

\begin{restatedthm}{\ref{thm:variancereduction} (restated)}
    For any $s \in \mathcal{S}$, $\gamma \in [0,1)$, and $\lambda = 1$, $\bar c, \bar \rho = \infty$,
    \begin{equation*}
        \mathbb{V}_{\hat \mu}\left(G_t^{\text{TIS}, \lambda} \mid S_t =s \right) \leq  \mathbb{V}_{\pi}\left(G_t^{\text{TIS}, \lambda} \mid S_t = s\right) - \epsilon(s)
    \end{equation*}
Where,
\begin{equation*}
    c(s) := \textstyle{\mathbb{E}_{a \sim \pi}[\hat q_{\pi}(s, a)] - \left(\mathbb{E}_{a\sim \pi}[\sqrt{\hat q_{\pi}(s, a)}]\right)^2}
\end{equation*}
And,
\begin{equation*}
    \epsilon(s) := c(s) + \gamma^2\mathbb{E}_{A_t \sim \hat \mu_t} \left[ \rho_t^2 \mathbb{E}_{S_{t+1} \sim p}\left[\epsilon(S_{t+1})\right] \right]
\end{equation*}
\end{restatedthm}
\begin{proof}
Recall the variance Bellman operator,
\begin{equation*}
    \mathcal{J}_\mu = \mathcal{K}_\mu\mathcal{J}_\mu + b_\mu
\end{equation*}
where,
\begin{align*}
    (\mathcal{K}_\mu f)(s) &= \mathbb{E}_{A_t \sim \mu}[\gamma^2 c_t^2 \mathbb{E}_{S_{t+1}\sim p}[f(S_{t+1}) \mid S_t, A_t] \mid S_t=s]\\
     b_\mu(s) &= \mathbb{E}_{A_t \sim \mu}[\rho_t^2 \nu_\pi(S_t, A_t) \mid S_t=s] + \mathbb{E}_{A_t \sim \mu}[\rho^2_tq^2_\pi(S_t, A_t) \mid S_t=s] - v^2_\pi(s)\\
\end{align*}
In the on-policy case (i.e., $\mu=\pi$), and given $\bar c, \bar \rho \geq 1$, notice that $J_\pi =  \mathbb{V}_{\pi}(G^{\text{TIS}, \lambda}_t \mid S_t = s)$ is a fixed point for $\mathcal{J}_\pi$ (Lemma \ref{lem:variancequality}), observing that $\mathcal{K}_\pi$ is a $\gamma^2$-contraction, thus $J_\pi$ is the unique fixed point. Now denote the per state minimizer as,
\begin{equation*}
    c(s) = \textstyle{\mathbb{E}_{a \sim \pi}[\hat q_{\pi}(s, a)] - \left(\mathbb{E}_{a\sim \pi}[\sqrt{\hat q_{\pi}(s, a)}]\right)^2}
\end{equation*}
and the propagated term,
\begin{equation*}
    \epsilon(s) = c(s) + \gamma^2\mathbb{E}_{A_t \sim \hat \mu_t} \left[ \rho_t^2 \mathbb{E}_{S_{t+1} \sim p}\left[\epsilon(S_{t+1})\right] \right]
\end{equation*}
It remains to verify that $J_{\pi} - \epsilon$ is the unique fixed point for the off-policy variance Bellman operator $\mathcal{J}_{\hat \mu}$ with behaviour policy,
\begin{equation*}
    \hat \mu(a \vert s) \propto \pi(a \vert s) \textstyle{\sqrt{\hat q_{\pi}(s, a)}}
\end{equation*}
We continue by inspection,
\begin{align*}
    &\mathcal{J}_{\hat \mu} (J_\pi - \epsilon)(s) \\
    &= \mathcal{K}_{\hat \mu} (J_\pi - \epsilon)(s) + b_{\hat \mu}\\
    \begin{split}
        &= \mathbb{E}_{A_t \sim \hat \mu}[\gamma^2 c_t^2\mathbb{E}_{S_{t+1}\sim p}[J_\pi(S_{t+1}) \mid S_t, A_t] \mid S_t=s] - \mathbb{E}_{A_t\sim \hat \mu}[\gamma^2c_t^2 \mathbb{E}_{S_{t+1} \sim p}[\epsilon(S_{t+1}) \mid S_t, A_t] \mid S_t=s]\\ 
        & + \mathbb{E}_{A_t \sim \mu}[\rho_t^2 \nu_\pi(S_t, A_t) \mid S_t=s] + \mathbb{E}_{A_t \sim \mu}[\rho^2_tq^2_\pi(S_t, A_t) \mid S_t=s] - v^2_\pi(s)
    \end{split}   \\
    \begin{split}
        &= \mathbb{E}_{A_t \sim \hat \mu}[\gamma^2 c_t^2\mathbb{E}_{S_{t+1}\sim p}[\mathbb{E}_{A_{t+1} \sim \pi}[\tilde q({S_{t+1}, A_{t+1})} \mid S_{t+1}, S_t, A_t] \mid S_t, A_t]  + \rho_t^2 \nu_\pi(S_t, A_t)  + \rho^2_tq^2_\pi(S_t, A_t) \mid S_t=s]\\ 
        &-v^2_\pi(s) - \mathbb{E}_{A_t\sim \hat \mu}[\gamma^2c_t^2 \mathbb{E}_{A_t\sim \hat \mu}[\gamma^2c_t^2 \mathbb{E}_{S_{t+1} \sim p}[\epsilon(S_{t+1}) \mid S_t, A_t] \mid S_t=s]
    \end{split} \tag*{(by Lemma \ref{lem:variancequality})} \\
    &= \mathbb{E}_{A_t \sim \hat \mu}[\rho_t^2(\tilde q(S_t, A_t) + v^2_\pi(S_t)) \mid S_t=s] -v^2_\pi(s) - \mathbb{E}_{A_t\sim \hat \mu}[\gamma^2c_t^2 \mathbb{E}_{S_{t+1} \sim p}[\epsilon(S_{t+1}) \mid S_t, A_t] \mid S_t=s] \tag*{(by (\ref{eq:qtildedef}) and \text{$\rho_t = c_t$})}\\
    &= \mathbb{E}_{A_t \sim \hat \mu}[\rho_t^2\hat q(S_t, A_t) \mid S_t=s] -v^2_\pi(s) - \mathbb{E}_{A_t\sim \hat \mu}[\gamma^2c_t^2 \mathbb{E}_{S_{t+1} \sim p}[\epsilon(S_{t+1})] \tag*{(by Lemma \ref{lem:qhattildedef})}\\
    \begin{split}
        & = \mathbb{V}_{A_t \sim \mu}(\rho_t \textstyle{\sqrt{\hat q_\pi(S_t, A_t)}} \mid S_t) + \left(\mathbb{E}_{A_t \sim \mu_t}[\rho_t \textstyle{\sqrt{\hat q_\pi(S_t, A_t)}}\mid S_t] \right)^2\\
        &- v^2_\pi(s) - \mathbb{E}_{A_t\sim \hat \mu}[\gamma^2c_t^2 \mathbb{E}_{S_{t+1} \sim p}[\epsilon(S_{t+1}) \mid S_t, A_t] \mid S_t=s]
    \end{split} \tag*{(definition of variance and non-negativity)}\\
    \begin{split}
        & = \mathbb{V}_{A_t \sim \mu}(\rho_t \textstyle{\sqrt{\hat q_\pi(S_t, A_t)}} \mid S_t) + \left(\mathbb{E}_{A_t \sim \pi}[ \textstyle{\sqrt{\hat q_\pi(S_t, A_t)}}\mid S_t] \right)^2 \\
        &- v^2_\pi(s) - \mathbb{E}_{A_t\sim \hat \mu}[\gamma^2c_t^2 \mathbb{E}_{S_{t+1} \sim p}[\epsilon(S_{t+1}) \mid S_t, A_t] \mid S_t=s]
    \end{split} \tag*{(change of measure)}\\
    & =\left(\mathbb{E}_{A_t \sim \pi}[ \textstyle{\sqrt{\hat q_\pi(S_t, A_t)}}\mid S_t] \right)^2 - v^2_\pi(s) - \mathbb{E}_{A_t\sim \hat \mu}[\gamma^2c_t^2 \mathbb{E}_{S_{t+1} \sim p}[\epsilon(S_{t+1}) \mid S_t, A_t] \mid S_t=s] \tag*{(definition of \text{$\hat \mu$} gives zero variance Lemma \ref{lem:zero})}\\
    \begin{split}
    &= \mathbb{E}_{A_t \sim \pi}[\hat q_\pi(S_t, A_t) \mid S_t] - v^2_\pi(s) + \left(\mathbb{E}_{A_t \sim \pi}[ \textstyle{\sqrt{\hat q_\pi(S_t, A_t)}}\mid S_t] \right)^2\\
         &- \mathbb{E}_{A_t \sim \pi}[\hat q_\pi(S_t, A_t) \mid S_t] - \mathbb{E}_{A_t\sim \hat \mu}[\gamma^2c_t^2 \mathbb{E}_{S_{t+1} \sim p}[\epsilon(S_{t+1}) \mid S_t, A_t] \mid S_t=s]
    \end{split}\\
    \begin{split}
        & = \mathbb{V}_{\pi}[G^{\text{TIS}, \lambda}_t \mid S_t] + \left(\mathbb{E}_{A_t \sim \pi}[ \textstyle{\sqrt{\hat q_\pi(S_t, A_t)}}\mid S_t] \right)^2 \\
         &- \mathbb{E}_{A_t \sim \pi}[\hat q_\pi(S_t, A_t) \mid S_t] - \mathbb{E}_{A_t\sim \hat \mu}[\gamma^2c_t^2 \mathbb{E}_{S_{t+1} \sim p}[\epsilon(S_{t+1}) \mid S_t, A_t] \mid S_t=s]
    \end{split} \tag*{(from Lemma \ref{lem:variancequality} and Lemma \ref{lem:qhattildedef})}\\
        &= J_\pi(s) - \left(c(s) + \mathcal{K}_{\hat\mu} \epsilon(s) \right) \tag*{(by definition and \text{$\rho_t = c_t$})}\\
        & = J_\pi(s) - \epsilon(s)  
\end{align*}
Thus, $J_\pi - \epsilon$ is a fixed point of $\mathcal{J}_{\hat \mu}$, and by uniqueness of fixed points (and Lemma \ref{lem:variancebellman}),
\begin{equation*}
    J_{\hat \mu}(s) = J_\pi(s) - \epsilon(s)
\end{equation*}
and in particular,
\begin{equation*}
    \mathbb{V}_{\hat \mu}\left(G_t^{\text{TIS}, \lambda} \mid S_t =s \right) =  \mathbb{V}_{\pi}\left(G_t^{\text{TIS}, \lambda} \mid S_t = s\right) - \epsilon(s) \leq \mathbb{V}_{\pi}\left(G_t^{\text{TIS}, \lambda} \mid S_t = s\right) - \epsilon(s)
\end{equation*}
With equality only when $\epsilon(s)=0$.
\end{proof}
\subsection{Proof of Theorem \ref{thm:qhat}}
\label{sec:thmqhat}
\begin{restatedthm}{\ref{thm:qhat} (restated)}
Let,
\begin{equation*}
    \hat r_{\pi}(s,a) := 2 r(s, a) q_{\pi}(s, a) - r^2(s,a)
\end{equation*}
Then for any $\gamma \in [0, 1)$,
\begin{equation*}
    \hat q_{\pi}(s, a) := \hat r_{\pi}(s, a) + \gamma^2\mathbb{E}_{A_t\sim\pi, S_{t+1}\sim p}[\hat q_{\pi}(S_{t+1}, A_t)]
\end{equation*} 
\end{restatedthm}
\begin{proof}
    \begin{align*}
        &\hat q_{\pi}(s, a)\\
        & = \tilde q_{\pi}(s, a) + v^2_\pi(s) \tag*{(by Lemma \ref{lem:qhattildedef})}\\
        & = \tilde r_\pi(s, a) + v^2_\pi(s) + \gamma^2\mathbb{E}_{s'\sim p, a' \sim \pi}[\tilde q_\pi(s', a')] \tag*{(by (\ref{eq:qtildedef}))}\\
        & = \tilde r_\pi(s, a) + v^2_\pi(s) + \gamma^2\mathbb{E}_{s'\sim p, a' \sim \pi}[\tilde q_\pi(s', a') + v_\pi^2(s') - v^2_\pi(s')] \\
        & = \tilde r_\pi(s, a) + v^2_\pi(s) + \gamma^2\mathbb{E}_{s'\sim p, a' \sim \pi}[\hat q_\pi(s', a') - v^2_\pi(s')] \tag*{(by Lemma \ref{lem:qhattildedef})}\\
        & = \nu_\pi(s, a) + q^2_\pi(s, a) - \gamma^2\mathbb{E}_{s' \sim p} [v_\pi^2(s')] + \gamma^2\mathbb{E}_{s'\sim p, a' \sim \pi}[\hat q_\pi(s', a')] \tag*{(by (\ref{eq:rtildedef}))}\\
        & = -(\mathbb{E}_{s' \sim p}[\gamma v_\pi(s')])^2 + q^2_\pi(s, a) + \gamma^2\mathbb{E}_{s'\sim p, a' \sim \pi}[\hat q_\pi(s', a')] \tag*{(by (\ref{eq:defnu}))}\\
        & = -(q_\pi(s, a) - r(s, a))^2 + q^2_\pi(s, a) + \gamma^2\mathbb{E}_{s'\sim p, a' \sim \pi}[\hat q_\pi(s', a')]\\
        & = 2r(s, a)q_\pi(s, a) - r^2(s, a) + \gamma^2\mathbb{E}_{s'\sim p, a' \sim \pi}[\hat q_\pi(s', a')]\\
        & = \hat r_\pi(s, a) + \gamma^2\mathbb{E}_{s'\sim p, a' \sim \pi}[\hat q_\pi(s', a')] \tag*{(by (\ref{eq:rhatdef}))}
    \end{align*}
\end{proof}

\subsection{Proof of Theorem \ref{thm:policymatching}}
\label{sec:thmpolicymatching}

\begin{restatedthm}{\ref{thm:policymatching} (restated)}
If the parameterized behaviour policy 
\begin{equation*}
    \mu_{\xi}(a \vert s) \propto \pi_{\theta}(a \vert s) \textstyle{\sqrt{\hat Q_{\hat \zeta}(s, a)}}
\end{equation*} (i.e., matches (\ref{eq:propto})), then the loss $L^{\text{cont}}_{\mu}(\xi)$ is minimized. 
\end{restatedthm}

\noindent First we add some technical clarity, let $\tilde \nu(s, a) = \pi_{\theta}(a \vert s) \sqrt{\hat Q_{\hat \zeta}(s, a)}$ (assuming non-negativity of $\hat Q_{\hat \zeta}(s, a)$) and let $Z(s) = \int_a \tilde \nu(s, a)$. Then,
\begin{equation*}
    L_{\mu}^{\text{cont}}(\xi) = \hat{\mathbb{E}}_t \left[ \ln \mu_{\xi}(A_t \vert S_t) - \ln \pi_{\theta}(A_t \vert S_t) - \\ \frac{1}{2} \ln \hat Q_{\hat \zeta}(S_t, A_t) \right] 
\end{equation*}
is minimized by,
\begin{equation*}
    \mu^*_{\xi}(a \vert s) = \frac{\pi_{\theta}(a \vert s) \textstyle{\sqrt{\hat Q_{\hat \zeta}(s, a)}}}{Z(s)} \propto \pi_{\theta}(a \vert s) \textstyle{\sqrt{\hat Q_{\hat \zeta}(s, a)}}
\end{equation*}
\begin{proof}
Now writing the loss conditioning on $S_t = s$ and in terms of $\nu$,
\begin{equation*}
    \mathcal{L}(s ; \mu) = \mathbb{E}_{A_t \sim \mu}[\ln \mu(A_t \vert S_t) - \ln \tilde \nu(S_t, A_t) \mid S_t = s]
\end{equation*}
Letting $\bar \nu(a \vert s) := \tilde \nu(s, a)/Z(s)$ be the normalized density. Then,
\begin{equation*}
    \mathcal{L}(s ; \mu) = \mathbb{E}_{A_t \sim \mu}[\ln \mu(A_t \vert S_t) - \ln\bar \nu(S_t, A_t) \mid S_t = s] - \ln Z(s) = D_{\text{KL}}(\mu(\cdot \vert s) \Vert \bar \nu(\cdot \vert s)) - \ln Z(s)
\end{equation*}
By Gibbs' inequality $D_{\text{KL}}(\mu(\cdot \vert s) \Vert \bar \nu(\cdot \vert s)) \geq 0$ with equality if and only if $\mu(\cdot \vert s)= \bar \nu(\cdot \vert s)$. Since $Z(s)$ does not depend on $\mu$ then the conditional loss $\mathcal{L}(s ; \mu)$ is minimized exactly at $\mu^*(\cdot \vert s)=\bar \nu(\cdot \vert s)$. Averaging over the empirical sample (i.e., $\hat{\mathbb{E}}_t$) simply reweights the states and adds constants (independent of $\mu_{\xi}$) to the overall loss $L_{\mu}^{\text{cont}}(\xi)$, thus the overall loss is minimized by,
\begin{equation*}
    \mu^*_{\xi}(a \vert s) = \bar \nu(a \vert s) = \frac{\tilde \nu(s \vert a)}{Z(s)} = \frac{\pi_{\theta}(a \vert s) \textstyle{\sqrt{\hat Q_{\hat \zeta}(s, a)}}}{\int_a \pi_{\theta}(a \vert s) \textstyle{\sqrt{\hat Q_{\hat \zeta}(s, a)}}} \propto \pi_{\theta}(a \vert s) \textstyle{\sqrt{\hat Q_{\hat \zeta}(s, a)}}
\end{equation*}
\end{proof}
\clearpage
\section{Ablation Studies}
\label{sec:ablationstudies}

We provide here a series of ablation studies. In particular, we look at how removing $\hat Q_{\hat \zeta}$ from the loss (\ref{eq:mucontloss}) affects the convergence properties of BPO for the MuJoCo experiments. Furthermore, we investigate how the absence of \emph{symlog targets} may affect training stability and convergence.

\paragraph{No $\hat Q_{\hat \zeta}$ in (\ref{eq:mucontloss}).} Since $\hat Q_{\hat \zeta}$ is an integral part of designing the behaviour policy $\mu$, we would expect to see worse performance. Without $Q_{\hat \zeta}$ the behaviour policy $\mu$ simply matches the target policy $\pi$, without biasing it towards higher valued actions and failing to reduce the variance of the estimated returns. For collecting these results, we simply picked the best performing configuration of BPO and modified the loss function, that is,
\begin{equation*}
    {L'_{\mu}}(\xi) = \hat{\mathbb{E}}_t \bigg[ \ln \mu_{\xi}(A_t \vert S_t) - \ln \pi_{\theta}(A_t \vert S_t) \bigg]
\end{equation*}
The results for this ablation are presented in Figure \ref{fig:ablation1}.

\begin{figure*}[ht]
\centering
\includegraphics[width=0.98\linewidth]{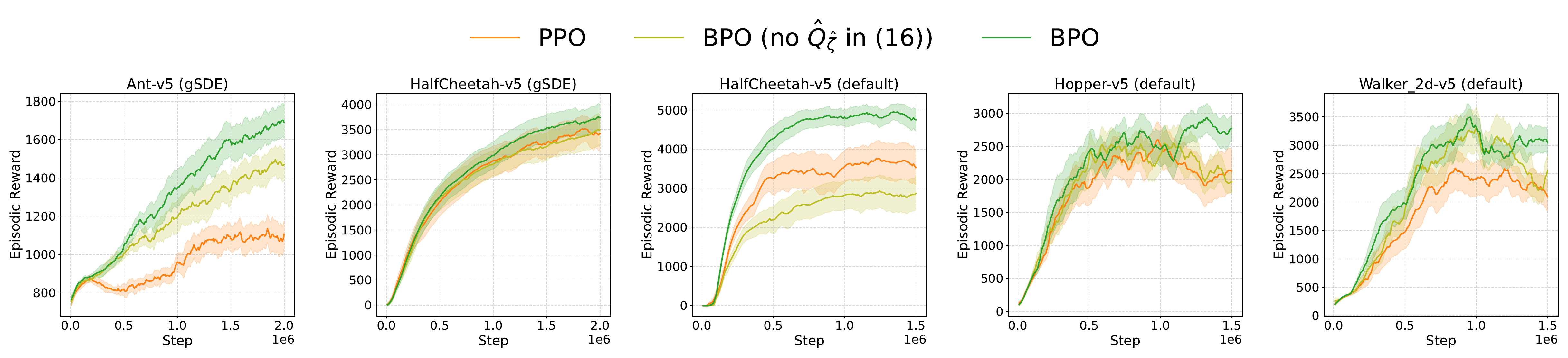}
\caption{MuJoCo mean returns (10 eval episodes) for BPO and BPO without $\hat Q_{\hat \zeta}$ in (\ref{eq:mucontloss}).}
\label{fig:ablation1}
\end{figure*}

\noindent In all cases we see that removing $\hat Q_{\hat \zeta}$ from the loss function results in worse performance, indicating that including $\hat Q_{\hat \zeta}$ in the loss (\ref{eq:mucontloss}) plays an important role in optimizing the behaviour policy $\mu$ so that it collects lower variance returns estimates. In some cases BPO without $\hat Q_{\hat \zeta}$, performs better than PPO, this can possibly be explained by the fact that the behaviour policy $\mu$ in this instance, is a essentially slower moving copy $\pi$, explosive updates to $\pi$ might not affect $\mu$ immediately at which point new data can be collected to stabilize learning. The cases that BPO without $\hat Q_{\hat \zeta}$, performs worse than PPO, can be explained with similar reasoning, perhaps the slower moving behaviour policy $\mu$ biases the data collected (due to truncation) resulting in slower learning and sub-optimal convergence. A more thorough investigation of these intuitions is left as future work.

\paragraph{No \emph{symlog targets}.} As discussed in the main text, \emph{symlog targets} are important for the stability of the two networks $Q_{\zeta}$ and $\hat Q_{\hat \zeta}$. We often find that the relative magnitudes between the predictions from $Q_{\zeta}$ and $\hat Q_{\hat \zeta}$ are large, \emph{symlog targets} help the networks quickly move their predictions to large magnitudes, the \emph{symexp} transform is used to preserve magnitudes for example in calculating the $\hat r_\pi$ values (c.f., \ref{eq:rhatdef}) and in the loss functions for the behaviour policy $\mu$, (c.f., (\ref{eq:mudiscloss}), (\ref{eq:mucontloss})). We verify this claim with a straightforward ablation study. Again, we simply pick the best performing configuration of BPO, turn \emph{symlog targets} off  and keep everything else intact. The results are presented in Figure \ref{fig:ablation2}.

\begin{figure*}[ht]
\centering
\includegraphics[width=0.98\linewidth]{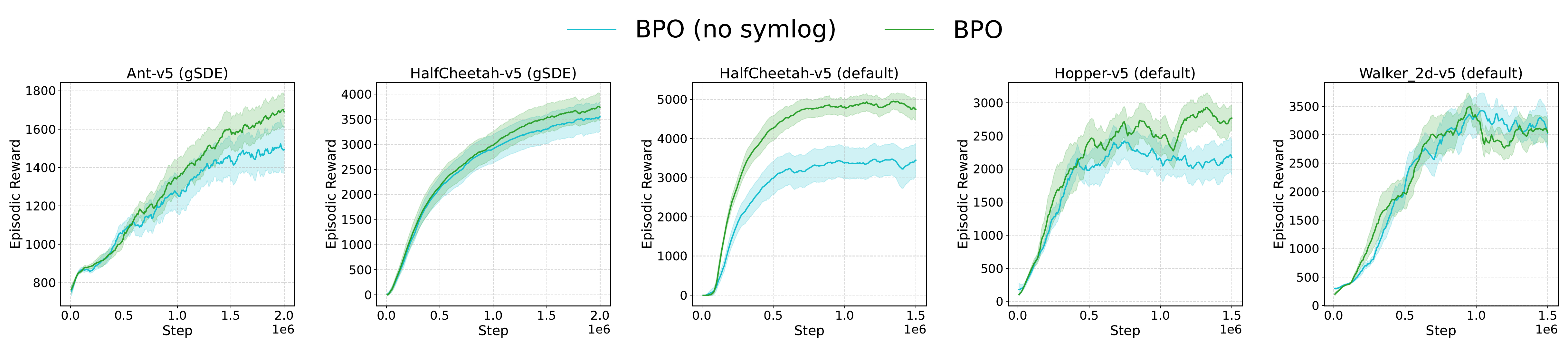}
\caption{MuJoCo mean returns (10 eval episodes) for BPO and BPO without \emph{symlog}.}
\label{fig:ablation2}
\end{figure*}

\noindent We see that in all cases, training the networks $Q_{\zeta}$ and $\hat Q_{\hat \zeta}$ without \emph{symlog targets} hurts performance, thus justifying its use, and providing some good evidence for our intuition on why this works. Further ablations, particular of those features mentioned in Appendix \ref{sec:additionalimplementation} are left as future work. 

\section{Implementation Details}

We summarize here the important implementation details, including, training and hyperparameter details. We note that the MuJoCo \cite{todorov2012mujoco} environments are provided by Gymnasium \cite{towers2025gymnasium}, see \url{https://gymnasium.farama.org/}. We also implemented the \texttt{ShortCorridor} grid world using the same Gymnasium interface, our implementation was inspired by the following re-implementation of the original experiment from \cite{sutton2018reinforcement}, see \url{https://github.com/ShangtongZhang/reinforcement-learning-an-introduction/blob/master/chapter13/short_corridor.py}. 

\paragraph{Access to code.} We provide independent implementations of all of the algorithms (REINFORCE, BPO and PPO) can be found via \url{https://github.com/sacktock/BPO}. The algorithms are implemented in \emph{Python} with JAX \cite{jax2018github}, which supports vectorized computation on GPU and JIT compilation for further optimization. For installation instructions please refer to the \texttt{README.md} file provided. For completeness we continue by providing listings for each instantiation of BPO, REINFORCE and PPO. 

\subsection{Listings}
\label{sec:listings}

\begin{listing}[ht]
    \caption{PPO update steps}
    \begin{algorithmic}[1]
        \State Generate $S_0A_0R_1\ldots S_{T-1}A_{T-1}R_{T-1}$ with $\pi$
        \State \emph{// PPO update steps}
        \For{$\text{epoch} = 0,1, \ldots \text{n\_epochs}$}
        \State Optimize $\pi_{\theta}$ with the clipped surrogate objective (\ref{eq:clippedobjective})
        \State Optimzie $V_{\omega}$ with (\ref{eq:valueloss})
        \EndFor
    \end{algorithmic}
\end{listing}

\begin{listing}[ht]
    \caption{REINFORCE update steps}
    \begin{algorithmic}[1]
        \State Generate $S_0A_0R_1\ldots S_{T-1}A_{T-1}R_{T-1}$ with $\pi$
        \State \emph{// REINFORCE update steps}
        \For{$t = 0, \ldots, T-1$}
        
        \State $G_{t}\gets \sum_{k=t}^T R_{k+1}$
        \State $\theta \gets \theta + \alpha G_{t} \nabla \ln \pi_{\theta}(A_t \vert S_t)$ (policy gradient)
        \EndFor
    \end{algorithmic}
\end{listing}

\begin{listing}[ht]
\caption{Example instantiation of BPO with the REINFORCE update steps}
    \label{alg:bporeinforce}
\begin{algorithmic}[1]
\State \textbf{Initialize} $\theta$, $\omega$, $\zeta$, $\hat \zeta$, $\xi$ and empty replay buffer $D$
\For{$\text{phase}= 0, 1, \ldots$}
\State Generate $S_0A_0R_1\ldots S_{T-1}A_{T-1}R_{T-1}$ with $\mu$
\State Add experience tuples $(s_i, a_i, r_i, s'_{i})$ to $D$
\State \emph{// REINFORCE update steps}
\For{$t = 0, \ldots, T-1$}
\State $G^{\text{TIS},\lambda}_{t}\gets \sum_{k=t}^T (\prod^{k-1}_{i=t} c_i) \rho_k\gamma^{k-t} R_{k+1}$ (with $\lambda=1.0$ and no value estimates)
\State $\theta \gets \theta + \alpha G^{\text{TIS},\lambda}_{t} \nabla \ln \pi_{\theta}(A_t \vert S_t)$ (policy gradient)
\EndFor
\State \emph{// Auxilliary steps}
\For{$\text{epoch} = 0,1, \ldots \text{n\_qvf\_epochs}$}
 \State Optimize $Q_{\zeta}$ with FQE (c.f., (\ref{eq:qlossfn}))
\EndFor
\State Compute $\hat r_t$ (c.f., (\ref{eq:rhatdef}))
\For{$\text{epoch} = 0,1, \ldots \text{n\_qvf\_epochs}$}
    \State Optimize $\hat Q_{\hat\zeta}$ with FQE (c.f., (\ref{eq:qhatlossfn}))
\EndFor
\For{$\text{epoch} = 0,1, \ldots \text{n\_mu\_epochs}$}
    \State Optimize behaviour policy $\mu_{\xi}$ (c.f., (\ref{eq:mudiscloss}) or (\ref{eq:mucontloss}))
\EndFor
\EndFor
\end{algorithmic}
\end{listing}
\clearpage

\begin{listing}[ht]
\caption{Example instantiation of BPO with the PPO update steps}
    \label{alg:bpoppo}
\begin{algorithmic}[1]
\State \textbf{Initialize} $\theta$, $\omega$, $\zeta$, $\hat \zeta$, $\xi$ and empty replay buffer $D$
\For{$\text{phase}= 0, 1, \ldots$}
\State Generate $S_0A_0R_1\ldots S_{T-1}A_{T-1}R_{T-1}$ with $\mu$
\State Add experience tuples $(s_i, a_i, r_i, s'_{i})$ to $D$
\State \emph{// PPO update steps}
\For{$\text{epoch} = 0,1, \ldots \text{n\_epochs}$}
        \State Optimize $\pi_{\theta}$ with the clipped surrogate objective (\ref{eq:clippedobjective})
        \State Optimzie $V_{\omega}$ with (\ref{eq:valueloss})
        \EndFor
\State \emph{// Auxilliary steps}
\For{$\text{epoch} = 0,1, \ldots \text{n\_qvf\_epochs}$}
 \State Optimize $Q_{\zeta}$ with FQE (c.f., (\ref{eq:qlossfn}))
\EndFor
\State Compute $\hat r_t$ (c.f., (\ref{eq:rhatdef}))
\For{$\text{epoch} = 0,1, \ldots \text{n\_qvf\_epochs}$}
    \State Optimize $\hat Q_{\hat\zeta}$ with FQE (c.f., (\ref{eq:qhatlossfn}))
\EndFor
\For{$\text{epoch} = 0,1, \ldots \text{n\_mu\_epochs}$}
    \State Optimize behaviour policy $\mu_{\xi}$ (c.f., (\ref{eq:mudiscloss}) or (\ref{eq:mucontloss}))
\EndFor
\EndFor
\end{algorithmic}
\end{listing}

\clearpage
\subsection{Additional Implementation Details}
\label{sec:additionalimplementation}

We summarize the additional tricks that we use to stabilize learning and prevent overestimation issues for the two Q-networks $Q_{\xi}$ and $\hat Q_{\hat \xi}$:

\begin{enumerate}
    \item \emph{Polyak averaging} ($\tau$): To prevent overestimation, we also regularize the predictions of $Q_{\zeta}$ and $\hat Q_{\hat \zeta}$ towards slower moving target networks updated with Polyak averaging \cite{polyak1992acceleration}. 
    \item \texttt{Layer norm} and \texttt{Zero norm init}: At the architecture level, we use layer-normalization \cite{ba2016layer} for more stable learning and we initialize the weights of the final layer of the networks to zeros to prevent early overestimation issues and explosions often associated with \emph{symlog targets} \cite{hafner2024mastering}.
    \item \texttt{Weight TD update}: At the batch level, the losses in (\ref{eq:qlossfn}) and (\ref{eq:qhatlossfn}) are element-wise weighted with the corresponding IS ratios, so that out-of-distribution or old samples from the replay buffer are down weighted in the update. These weights are then normalized at the batch level to maintain consistent gradient magnitudes across batch updates. 
    \item \texttt{Clip targets}: To prevent any further possible explosions associated with the $\text{symexp}$ transform we clip all predictions and targets with the upper bounds $r_{\max}/(1 - \gamma)$ and $\hat r_{\max}/(1 - \gamma^2)$, where  $r_{\max}$ and $\hat r_{\max}$ are estimated during training.  
\end{enumerate}

\paragraph{Training Details.} All sets of experiments are run on Ubuntu 22.04, with access to 2 NVIDIA Tesla A40 (48GB RAM) GPU and a 24-core/48 thread Intel Xeon CPU each with 16GB of additional CPU RAM. For the REINFORCE experiments, results (for 100 independent runs) can be collected in a matter of hours (usually no more than 2). For the MuJoCo experiments, results (for 10 independent runs) can be collected in no more than 36 hours with these compute resources. 

\paragraph{Wall-clock comparison.} Due to the overhead with training 3 additional networks (i.e., $\mu$, $Q_{\xi}$, $\hat Q_{\hat \xi}$) in parallel with the actor $\pi$ and value network $V$. BPO takes longer to complete one training run compared to the underlying PPO. For the hardware summarized above, one run for PPO can take approximately 1.6 hours compared to 3.5 hours for BPO, a slow down by a factor of at least 2 time. We maintain that this is reasonable, particularly if BPO converged to a policy with high overall returns. 

%\paragraph{Hyperparameters.} We now list all the hyperparameter details for our experiments below.

\begin{table}[ht]
  \centering
  \begin{tabular}{lcc}
    \toprule
    %\multicolumn{2}{c}{Part}                   \\
    %\cmidrule(r){1-2}
       Name & Symbol & Value  \\
    \midrule
    \multicolumn{3}{c}{REINFORCE (shared)}\\
    \midrule
    Learning rate max & $\alpha_{\text{max}}$ & $0.1$\\
    Learning rate min & $\alpha_{\text{min}}$ & $0.01$\\
    Schedule & - & exponential decay\\
    Discount factor & $\gamma$ & $0.99$\\
    \midrule
    \multicolumn{3}{c}{BPO}\\
    \midrule
    ``TD error'' truncation & $\bar \rho$ & 1.5 \\
    ``Trace cutting'' & $\bar c$ & 1.0 \\
    (\texttt{traj}) clipping & - & \texttt{False} \\
    Replay size & $\vert D \vert$ & 1024\\
    Batch size & $\vert B \vert$ & 256\\
    $\mu$ training epochs & \texttt{n\_mu\_epochs} & 1\\
    $\mu$ optimizer & - & adam (max\_grad=0.5, $\epsilon=10^{-5}$)\\
    $\mu$ learning rate & $\alpha_\mu$ & 0.001 \\
    $\mu$ network & - & 2-layers; 64 units; \texttt{ReLU}\\
    $(Q_{\xi}$/$\hat Q_{\hat \xi})$ training epochs & \texttt{n\_qvf\_epochs} & 1\\
    $(Q_{\xi}$/$\hat Q_{\hat \xi})$ optimizer & - & adam ($\epsilon= 10^{-5}$) \\
    $(Q_{\xi}$/$\hat Q_{\hat \xi})$ learning rate & $\alpha_Q$ & 0.001\\
    $(Q_{\xi}$/$\hat Q_{\hat \xi})$ network & - & 2-layers; 64 units; \texttt{ReLU}\\
    \emph{Symlog targets} & - & \texttt{True}\\
    \emph{Symlog} regularization & - & 1.0\\
    \emph{Polyak} tau & $\tau$ & 0.02\\
    \texttt{Weight TD update} & - & \texttt{True}\\
    \texttt{Clip targets} & - & \texttt{True}\\
    \texttt{Zero norm init} & - & \texttt{True}\\
    \texttt{Layer norm} & - & \texttt{True}\\
    \bottomrule
  \end{tabular}
  \caption{Hyperparameter details for Figure \ref{fig:reinforce}: REINFORCE and BPO (REINFORCE).}
  \label{tab:reinforcehyperparams}
\end{table}

\begin{table}[t]
  \centering
  \begin{tabular}{lcc}
    \toprule
    %\multicolumn{2}{c}{Part}                   \\
    %\cmidrule(r){1-2}
       Name & Symbol & Value  \\
    \midrule
    \multicolumn{3}{c}{PPO (shared)}\\
    \midrule
    Num. steps & $T$ & 512\\
    Num. envs. & - & 16\\
    Batch size & - & 128\\
    $\pi$ training epochs & \texttt{n\_epochs} & 20\\
    Discount factor & $\gamma$ & 0.99\\
    (GAE) lambda & $\lambda$ & 0.9\\
    Clip range & $\epsilon$ & 0.4\\
    Normalize & - & observations, advantages\\
    Ent. coef. & $\beta_{\text{ent}}$ & 0.0\\
    Value coef. & $\beta_{\text{value}}$ & 0.5\\
    \texttt{gSDE} & - & \texttt{True}\\
    \texttt{gSDE} Sample freq. & - & 4\\
    $\pi$ optimizer & - & adam (max\_grad=0.5, $\epsilon=10^{-5}$)\\
    $\pi$ learning rate & - & $3\times 10^{-5}$\\
    Initial log std  & $\log \sigma$ & -1.0\\
    $\pi$ network & - & 2-layers; 256 units; \texttt{ReLU}\\
    Tanh squashed $\pi$ - & \texttt{False}\\
    $V$ optimizer & - & adam (max\_grad=0.5, $\epsilon=10^{-5}$)\\
    $V$ learning rate & - & $3\times 10^{-5}$\\
    $V$ network & - & 2-layers; 256 units; \texttt{ReLU}\\
    \midrule
    \multicolumn{3}{c}{BPO}\\
    \midrule
    ``TD error'' truncation & $\bar \rho$ & $\{ 1.0, 1.5\}$ \\
    ``Trace cutting'' & $\bar c$ & $\{ 1.0, 1.5\}$  \\
    (\texttt{traj}) clipping & - & $\{ \texttt{True}, \texttt{False}\}$ \\
    Replay size & $\vert D \vert$ & 8192\\
    Batch size & $\vert B \vert$ & 128\\
    $\mu$ training epochs & \texttt{n\_mu\_epochs} & 40\\
    $\mu$ optimizer & - & adam (max\_grad=0.5, $\epsilon=10^{-5}$)\\
    $\mu$ learning rate & $\alpha_\mu$ & $3\times 10^{-5}$ \\
    $\mu$ network & - & 2-layers; 256 units; \texttt{ReLU}\\
    $(Q_{\xi}$/$\hat Q_{\hat \xi})$ training epochs & \texttt{n\_qvf\_epochs} & 40\\
    $(Q_{\xi}$/$\hat Q_{\hat \xi})$ optimizer & - & adam (max\_grad=0.5, $\epsilon=10^{-5}$) \\
    $(Q_{\xi}$/$\hat Q_{\hat \xi})$ learning rate & $\alpha_Q$ & $3\times 10^{-5}$\\
    $(Q_{\xi}$/$\hat Q_{\hat \xi})$ network & - & 2-layers; 256 units; \texttt{ReLU}\\
    \emph{Symlog targets} & - & \texttt{True}\\
    \emph{Symlog} regularization & - & 1.0\\
    \emph{Polyak} tau & $\tau$ & 0.02\\
    \texttt{Weight TD update} & - & \texttt{False}\\
    \texttt{Clip targets} & - & \texttt{False}\\
    \texttt{Clip actions} & - & \texttt{True}\\
    \texttt{Zero norm init} & - & \texttt{True}\\
    \texttt{Layer norm} & - & \texttt{True}\\
    \bottomrule
  \end{tabular}
  \caption{Hyperparameter details for MuJoCo experiments (\texttt{gSDE}): PPO and BPO (PPO).}
  \label{tab:ppogsdehyperparameters}
\end{table}

\begin{table}[t]
  \centering
  \begin{tabular}{lcc}
    \toprule
    %\multicolumn{2}{c}{Part}                   \\
    %\cmidrule(r){1-2}
       Name & Symbol & Value  \\
    \midrule
    \multicolumn{3}{c}{PPO (shared)}\\
    \midrule
    Num. steps & $T$ & 2048\\
    Num. envs. & - & 1\\
    Batch size & - & 64\\
    $\pi$ training epochs & \texttt{n\_epochs} & 10\\
    Discount factor & $\gamma$ & 0.99\\
    (GAE) lambda & $\lambda$ & 0.95\\
    Clip range & $\epsilon$ & 0.2\\
    Normalize & - & observations, advantages\\
    Ent. coef. & $\beta_{\text{ent}}$ & 0.001\\
    Value coef. & $\beta_{\text{value}}$ & 0.5\\
    \texttt{gSDE} & - & \texttt{False}\\
    $\pi$ optimizer & - & adam (max\_grad=0.5, $\epsilon=10^{-5}$)\\
    $\pi$ learning rate & - & $3\times 10^{-4}$\\
    Initial log std  & $\log \sigma$ & -1.0\\
    $\pi$ network & - & 2-layers; 64 units; \texttt{ReLU}\\
    Tanh squashed $\pi$ - & \texttt{False}\\
    $V$ optimizer & - & adam (max\_grad=0.5, $\epsilon=10^{-5}$)\\
    $V$ learning rate & - & $3\times 10^{-4}$\\
    $V$ network & - & 2-layers; 64 units; \texttt{ReLU}\\
    \midrule
    \multicolumn{3}{c}{BPO}\\
    \midrule
    ``TD error'' truncation & $\bar \rho$ & $\{ 1.0, 1.5\}$  \\
    ``Trace clipping'' & $\bar c$ & $\{ 1.0, 1.5\}$  \\
    (\texttt{traj}) clipping & - & $\{ \texttt{True}, \texttt{False}\}$ \\
    Replay size & $\vert D \vert$ & 8192\\
    Batch size & $\vert B \vert$ & 128 / 256 ($\mu$ / $Q$)\\
    $\mu$ training epochs & \texttt{n\_mu\_epochs} & 20\\
    $\mu$ optimizer & - & adam (max\_grad=0.5, $\epsilon=10^{-5}$)\\
    $\mu$ learning rate & $\alpha_\mu$ & $3\times 10^{-4}$ \\
    $\mu$ network & - & 2-layers; 64 units; \texttt{ReLU}\\
    $(Q_{\xi}$/$\hat Q_{\hat \xi})$ training epochs & \texttt{n\_vf\_epochs} & 20\\
    $(Q_{\xi}$/$\hat Q_{\hat \xi})$ optimizer & - & adam (max\_grad=0.5, $\epsilon=10^{-5}$) \\
    $(Q_{\xi}$/$\hat Q_{\hat \xi})$ learning rate & $\alpha_Q$ & $3\times 10^{-4}$\\
    $(Q_{\xi}$/$\hat Q_{\hat \xi})$ network & - & 2-layers; 64 units; \texttt{ReLU}\\
    \emph{Symlog targets} & - & \texttt{True}\\
    \emph{Symlog} regularization & - & 1.0\\
    \emph{Polyak} tau & $\tau$ & 0.02\\
    \texttt{Weight TD update} & - & \texttt{True}\\
    \texttt{Clip targets} & - & \texttt{True}\\
    \texttt{Clip actions} & - & \texttt{True}\\
    \texttt{Zero norm init} & - & \texttt{True}\\
    \texttt{Layer norm} & - & \texttt{True}\\
    \bottomrule
  \end{tabular}
  \caption{Hyperparameter details for MuJoCo experiments (\texttt{default}): PPO and BPO (PPO).}
  \label{tab:ppodefaulthyperparameters}
\end{table}

\begin{table}[t]
  \centering
  \begin{tabular}{lcc}
    \toprule
    %\multicolumn{2}{c}{Part}                   \\
    %\cmidrule(r){1-2}
       Name & Symbol & Value  \\
       \midrule
    \multicolumn{3}{c}{Ant-v5 (\texttt{gSDE})}\\
    \midrule
    Normalize & - & advantages\\
    \midrule
    \multicolumn{3}{c}{HalfCheetah-v5 (\texttt{gSDE})}\\
    \midrule
    initial log std  & $\log \sigma$ & -2.0\\
    \bottomrule
  \end{tabular}
  \caption{Environment specific hyperparameters for MuJoCo experiments.}
  \label{tab:envhyperparameters}
\end{table}

\end{document}